\providecommand{\U}[1]{\protect\rule{.1in}{.1in}}
\newtheorem{theorem}{Theorem}
\begin{document}

\title{Online Learning for Classification of Low-rank Representation Features and Its Applications in Audio Segment Classification}
\author{Ziqiang Shi,~Jiqing Han,~Tieran Zheng,~Shiwen Deng}
\maketitle

\begin{abstract}
In this paper, a novel framework based on trace norm minimization for audio segment is proposed. In this framework, both the feature extraction and classification are obtained by solving corresponding convex optimization problem with trace norm regularization. For feature extraction, robust principle component analysis (robust PCA) via minimization a combination of the nuclear norm and the $\ell_1$-norm is used to extract low-rank features which are robust to white noise and gross corruption for audio segments. These low-rank features are fed to a linear classifier where the weight and bias are learned by solving similar trace norm constrained problems. For this classifier, most methods find the weight and bias in batch-mode learning, which makes them inefficient for large-scale problems.
In this paper, we propose an online framework using
accelerated proximal gradient method. This framework has a main advantage in memory cost. In addition, as a result of the regularization formulation of matrix classification, the Lipschitz constant was given explicitly, and hence the step size estimation of
general proximal gradient method was omitted in our approach. Experiments on real data sets for laugh/non-laugh and applause/non-applause classification indicate that this novel framework is effective and noise robust.
\end{abstract}


\section{Introduction}


Audio feature extraction and classification methods
have been studied by many researchers over the
years~\cite{Lu2002,Cui2003,Pradeep2006,Umapathy2007}. In general, audio classification can be performed in two steps, which involves reducing the audio sound to a small set of parameters using various feature extraction techniques and classifying or categorizing over these parameters. Feature commonly exploited for audio classification can be roughly classified into time domain features, transformation domain features, time-transformation domain features or their combinations~\cite{Umapathy2007, Zhuang2008}. Many of those features are common to audio signal processing and speech recognition and have many successful performances in various applications. However almost all these features are based on short time duration and in vector form (it is easy to handle but sometimes not proper), although it is believed that long time duration (seconds) help a lot in decision making. In this work we will build robust features on a long time duration in matrix form which is the most natural way using long time audio information.

In order to map or smooth the audio segment into a robust matrix space, we introduce the trace norm regularization technique to audio signal processing. The trace norm regularization is a principled approach to learn low-rank matrices through convex optimization problems~\cite{Fazel2001}. These similar problems arise in many machine learning tasks such as matrix completion~\cite{Srebro2005}, multi-task learning~\cite{Argyriou2008}, robust principle component antilysis (robust PCA)~\cite{Wright2009,Lin2010}, and matrix classification~\cite{Tomioka2007}. In this paper, robust PCA is used to extract matrix representation features for audio segments. Unlike traditional frame based vector features, these matrix features are extracted based on sequences of audio frames. It is believed that in a short duration the signals are contributed by a few factors. Thus it is natural to approximate the frame sequence by low-rank features using robust PCA which assumes that the observed matrices are combinations of some low-rank matrices and some corruption noise matrices.

Having extracted descriptive features, various machine learning methods are used to provide a final classification of the audio events such as rule-based approaches, Gaussian mixture models, support vector machines, Bayesian networks, and etc.~\cite{Umapathy2007, Zhuang2008, Guo2003}. In most previous work, these two steps for audio classification are always separate and independent. In this work, we can learn the classifiers in solving similar optimization problems using trace norm regularization. After extraction of the robust low-rank matrix feature, the regularization framework based matrix classification approach proposed by Tomioka and Aihara in~\cite{Tomioka2007} is used to predict the label.


The problem of matrix classification (MC) with spectral regularization was first
proposed by Tomioka and Aihara in~\cite{Tomioka2007}. The goal of the problem is to infer the weight matrix and bias under low trace norm constraints and low deviation
of the empirical statistics from their predictions. The trace norm was use to
measure the complexity of the weight matrix of the linear classifier for matrix classifications. This kind of inference task belongs to the more general
problem of learning low-rank matrix through convex optimization. For the matrix rank minimization is NP-hard in
general due to the combinatorial nature of the rank function, a commonly-used convex relaxation of the rank function is the trace norm (nuclear norm)~\cite{Fazel2001},
defined as the sum of the singular values of the matrix.

Recent related researches are not focused on matrix classification directly, but rather on general trace norm minimization problem~\cite{Toh2010,Ji2009,Liu2009}. These general algorithm can be adapted to matrix classification suitably. In these methods, most are iterative \emph{batch} procedures~\cite{Toh2010,Ji2009,Liu2009}, accessing the whole training set at each iteration in order to minimize a weighted sum of a cost function and the trace norm. This kind of learning procedure cannot deal with huge size training set for the data probably cannot be loaded into memory simultaneously. Furthermore it cannot be started until the training data are prepared, hence cannot effectively deal with training data appear in sequence, such as audio and video processing.

To address these problems, we propose an \emph{online} approach that processes the training samples, one at a time, or in mini-batches to learn the weight matrix and the bias for matrix classification. We transform the general batch-mode accelerated proximal gradient (APG)~\cite{Toh2010,Ji2009} method for trace norm minimization to the online learning framework. In this online learning framework, a slight improvement over the exact APG leads an inexact APG (IAPG) method, which needs less computation in one iteration than using exact APG. In addition, as a special case of general convex optimization problem, we derived the closed-form of the Lipschitz constant, hence the step size estimation~\cite{Toh2010,Ji2009} of the general APG method was omitted in our approach.

Our main contributions in this work can be summarized as follows:

\begin{enumerate}
  \item To our best knowledge, we are the first to introduce low-rank constraints in audio and speech signal processing, and the results show that these constrains make the systems more robust to noise, especially to large corruptions.
  \item We propose online learning algorithms to learn the trace norm minimization based matrix classifier, which make the approaches work in real applications.
\end{enumerate}

The paper is organized as follows: Section~\ref{sec:MatrixFeature} presents the extraction of matrix representation feature. Section~\ref{sec:MatrixClassifyAED} presents the matrix classification problem solving via the general APG method and the proposed audio event detection with matrix classification. The proposed online methods with exact and inexact APG for weight and bias learning are introduced in Section~\ref{sec:OnlineLearning}. Section~\ref{sec:Experimental} is devoted to experimental results to demonstrate the characteristics and merits of the proposed algorithm. Finally we give some concluding remarks in Section~\ref{sec:Conclusions}.

\section{Low-Rank Matrix Representation Features}
\label{sec:MatrixFeature}
Over the past decades, a lot work has been done on audio and speech features for audio and speech processing~\cite{Cui2003,Pradeep2006,Zhuang2008}. Due to convenience and the short-time stationary assumption, these features are mainly in vector form based on frames, although it is believed that features based on longer duration help a lot in decision making. In order to build long term features, the consecutive frame signals are made together as rows, then the audio segments become matrices. Generally, it is assumed and believed that the consecutive frame signals are influenced by a few factors, thus these matrices are combinations of low-rank components and noise. Hence it is natural to approximate these matrices by low-rank matrices. In this work, transformations of these approximate low-rank matrices are used as features.

Given an observed data matrix $D\in\mathbb{R}^{m\times n}$, where $m$ is the number of frames and $n$ represents the number of samples in a frame, it is assumed that it can be decomposed as
\begin{equation}\label{eq:DecompAE}
D=A+E,
\end{equation}
where $A$ is the low-rank component and $E$ is the error or noise matrix. The purpose here is to recover the low-rank component without knowing the rank of it. For this problem, PCA is a suitable approach that it can find the low-dimensional approximating subspace by forming a low-rank approximation to the data matrix~\cite{Jolliffe1986}. However, it breaks down under large corruption, even if that corruption affects only a very few of the observation which is often encountered in practice~\cite{Lin2010}. To solve this problem, the following convex optimization formulation is proposed
 \begin{equation}\label{eq:RobustPCAFormulation}
\min_{A,E\in\mathbb{R}^{m\times n}} \|A\|_*+\lambda\|E\|_1, \textrm{ subject to } D=A+E,
\end{equation}
where $\|\cdot\|_*$ denotes the trace norm of a matrix which is defined as the sum of the singular values, $\|\cdot\|_1$ denotes the sum of the absolute values of matrix elements, and $\lambda$ is a positive regularization parameter. This optimization is refereed to as \emph{robust PCA} in~\cite{Wright2009} for its ability to exactly recover underlying low-rank structure in data even in the presence of large errors or outliers. In order to solve Equation~(\ref{eq:RobustPCAFormulation}), several algorithms have been proposed, among which the augmented Lagrange multiplier method is the most efficient and accurate at present~\cite{Lin2010}. In our work, this robust PCA method is employed for the low-rank matrix extraction.

In order to apply the augmented Lagrange multiplier (ALM) to the robust PCA problem, Lin et. al.~\cite{Lin2010} identify the problem as
 \begin{equation}\label{eq:ALMforRPCA}
X=(A,E), f(X)=\|A\|_*+\lambda\|E\|_1, \textrm{ and } h(X)=D-A-E,
\end{equation}
and the Lagrangian function becomes
\begin{equation}\label{eq:LagFuncforALM}
L(A,E,Y,\mu)\doteq \|A\|_*+\lambda\|E\|_1 
+<Y,D-A-E>+\frac{\mu}{2}\|D-A-E\|_F^2.
\end{equation}
Two ALM algorithms to solve the above formulation are proposed in~\cite{Lin2010}. Considering a balance between processing speed and accuracy, the robust PCA via the inexact ALM method is chosen in our work. Thus the matrix representation feature extraction process based on this approach is summarized in Algorithm~\ref{algo:PRCAviaIALM}. In Algorithm~\ref{algo:PRCAviaIALM}, $J(D)$ is defined as the larger one of $\|D\|_2$ and $\lambda^{-1}\|D\|_{\infty}$, where $\|\cdot\|_{\infty}$ is the maximum absolute value of the matrix elements. The $\mathcal{S}_{\varepsilon}[\cdot]$ is the soft-thresholding operator introduced in~\cite{Lin2010}.

Fig.~\ref{fig:RPCAtoAudioSeg} shows the recovered low-rank matrices via applying robust PCA to the matrix form of a typical laugh sound effect audio segment with or without corruptions. In which, the regularization parameter is fixed as 1. It can be seen that robust PCA extracted matrices are robust to large errors and Gaussian noise. Ideally, these above recovered low-rank matrices can be used as features directly. But in order to balance the speed and performance, in this work the we transform the recovered low-rank matrices into MFCCs (mel-frequency cepstral coefficients) matrices. All rows in the low-rank matrices are transformed into MFCCs independently. Fig.~\ref{fig:Spectrograms_RPCAtoAudioSeg} shows the spectrograms of the signal in Fig.~\ref{fig:RPCAtoAudioSeg} respectively. It seems that the spectrograms of the low-rank components vary not much compare to the spectrograms of the corrupted signals.

\begin{algorithm}
\label{algo:PRCAviaIALM}
Recovering of Low-rank Component from Audio Segments via RPCA.

\textbf{Input}: $D\in \mathbb{R}^{m\times n}$ (matrix form of the audio segment).

\textbf{Initialize}: $D\in \mathbb{R}^{m\times n}, Y_0=D/J(D), E_0=0, \mu_0>0, \rho>1, k=0.$

1: \textbf{while} not converged \textbf{do}

2: // Lines 3-4 solve $A_{k+1}=\mbox{arg}\mathop {\min}\limits_{A} L(A,E_k,Y_k,\mu_k).$

3: $(U,S,V)=\textrm{svd}(D-E_k+\mu_k^{-1}Y_k)$.

4: $A_k=U\mathcal{S}_{\mu_k^{-1}}[S]V^T$.

5: // Line 6 solves $E_{k+1}=\mbox{arg}\mathop {\min}\limits_{E} L(A_{k+1},E,Y_k,\mu_k).$

6: $E_{k+1}=\mathcal{S}_{\lambda\mu_k^{-1}}[D-A_{k+1}+\mu_k^{-1}Y_k].$

7: $Y_{k+1}=Y_k+\mu_k(D-A_{k+1}-E_{k+1})$.

8: Update $\mu_k$ to $\mu_{k+1}$.

9: $k\leftarrow k+1$.

10: \textbf{end while}

\textbf{Output}: $W\leftarrow W_k$.
\end{algorithm}

\begin{figure}

\subfigure[]{
\includegraphics[width=0.4\textwidth]{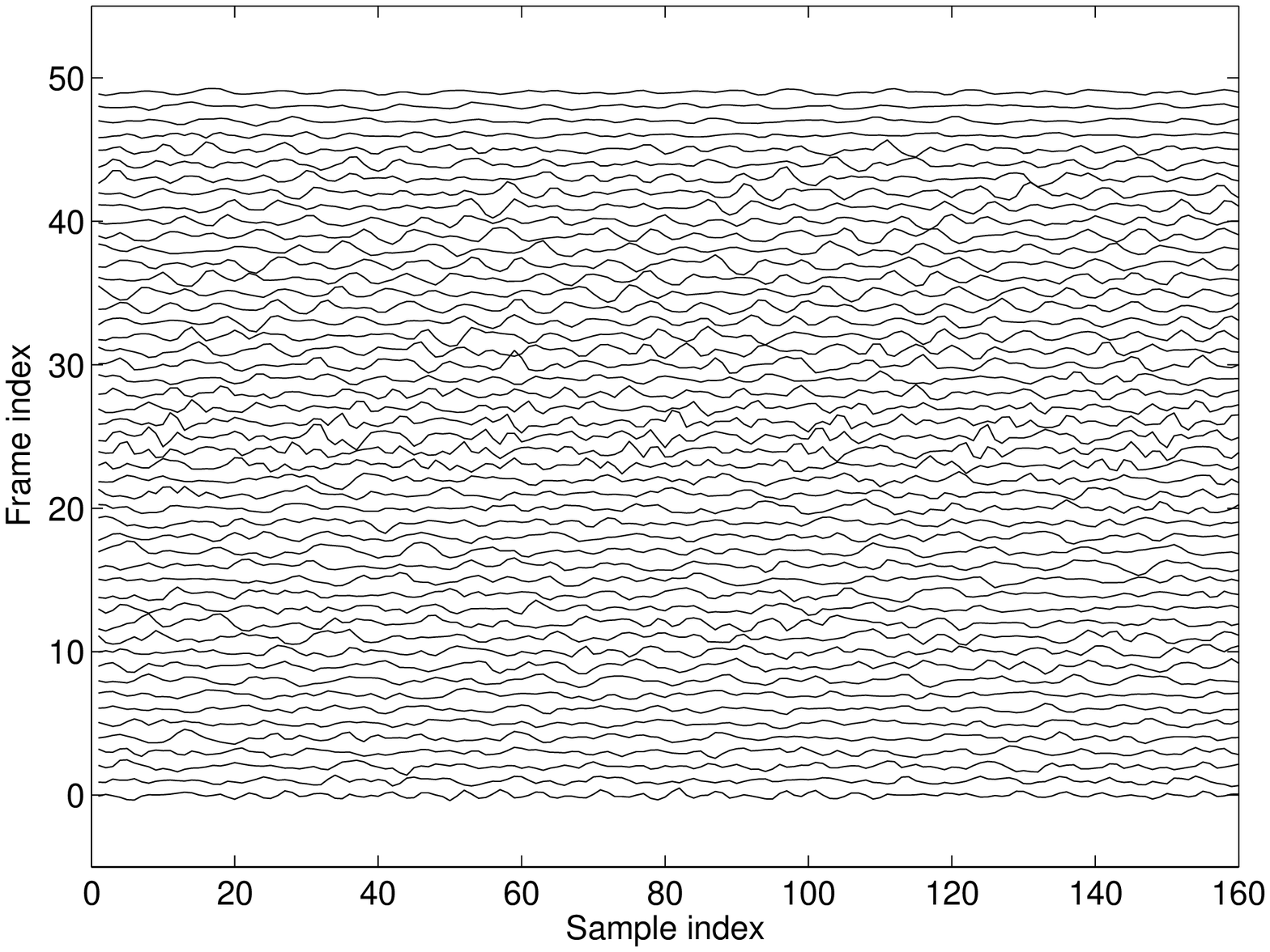}
\label{fig:MatLau}
}
\hspace{0.1in}
\subfigure[]{
\includegraphics[width=0.4\textwidth]{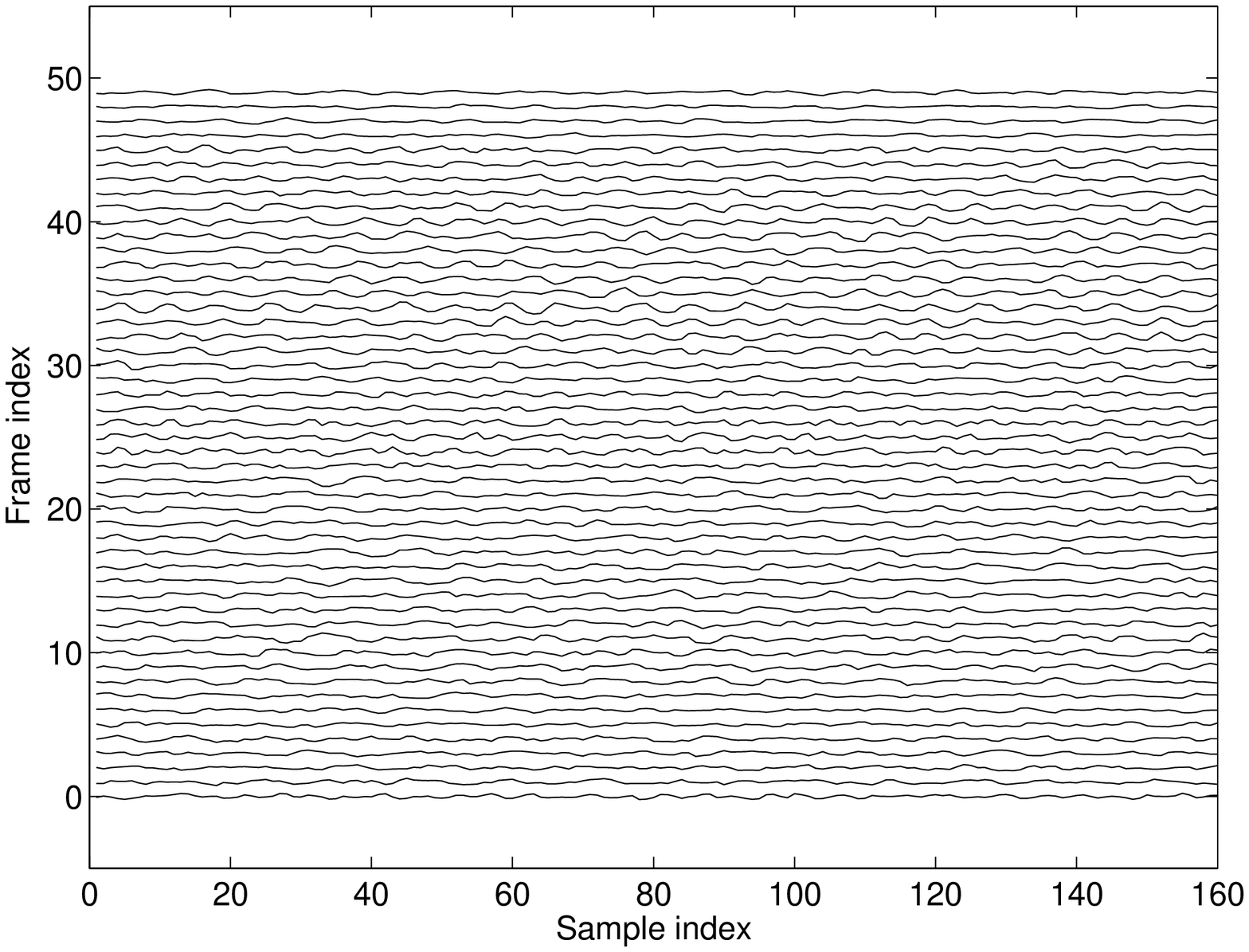}
}
\hspace{0.1in}
\subfigure[]{
\includegraphics[width=0.4\textwidth]{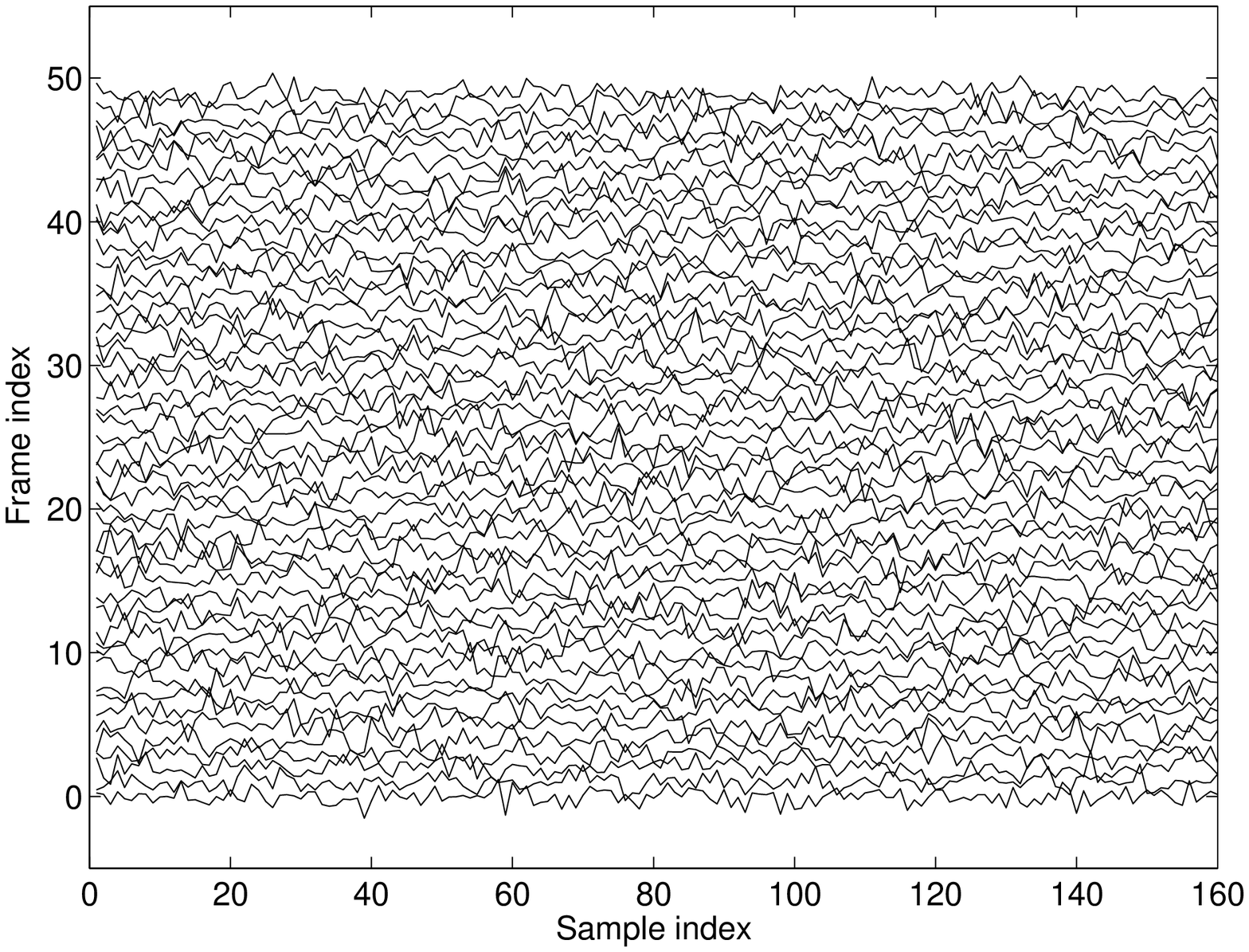}
}
\hspace{0.1in}
\subfigure[]{
\includegraphics[width=0.4\textwidth]{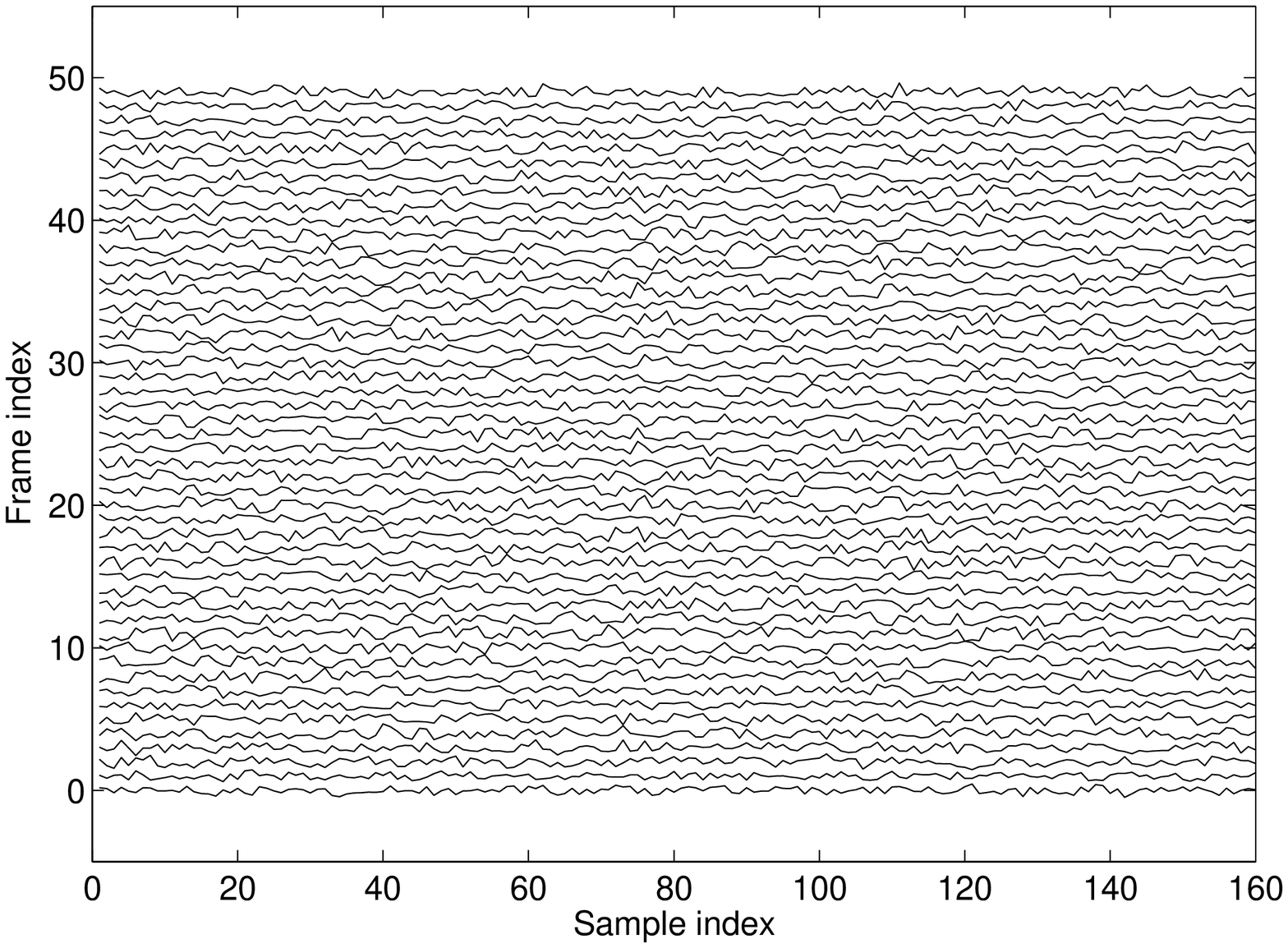}
}
\hspace{0.1in}
\subfigure[]{
\includegraphics[width=0.4\textwidth]{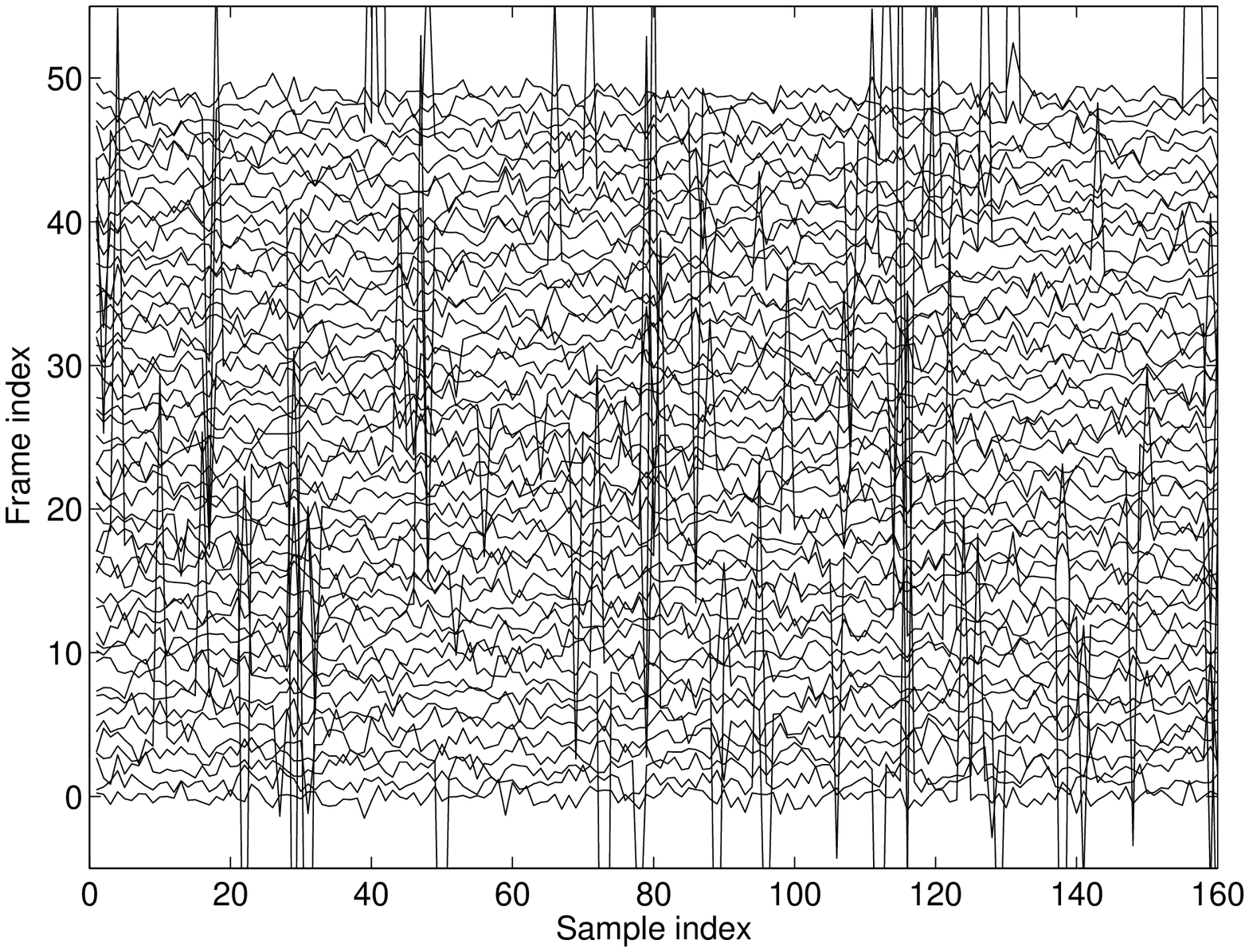}
}
\hspace{1in}
\subfigure[]{
\includegraphics[width=0.4\textwidth]{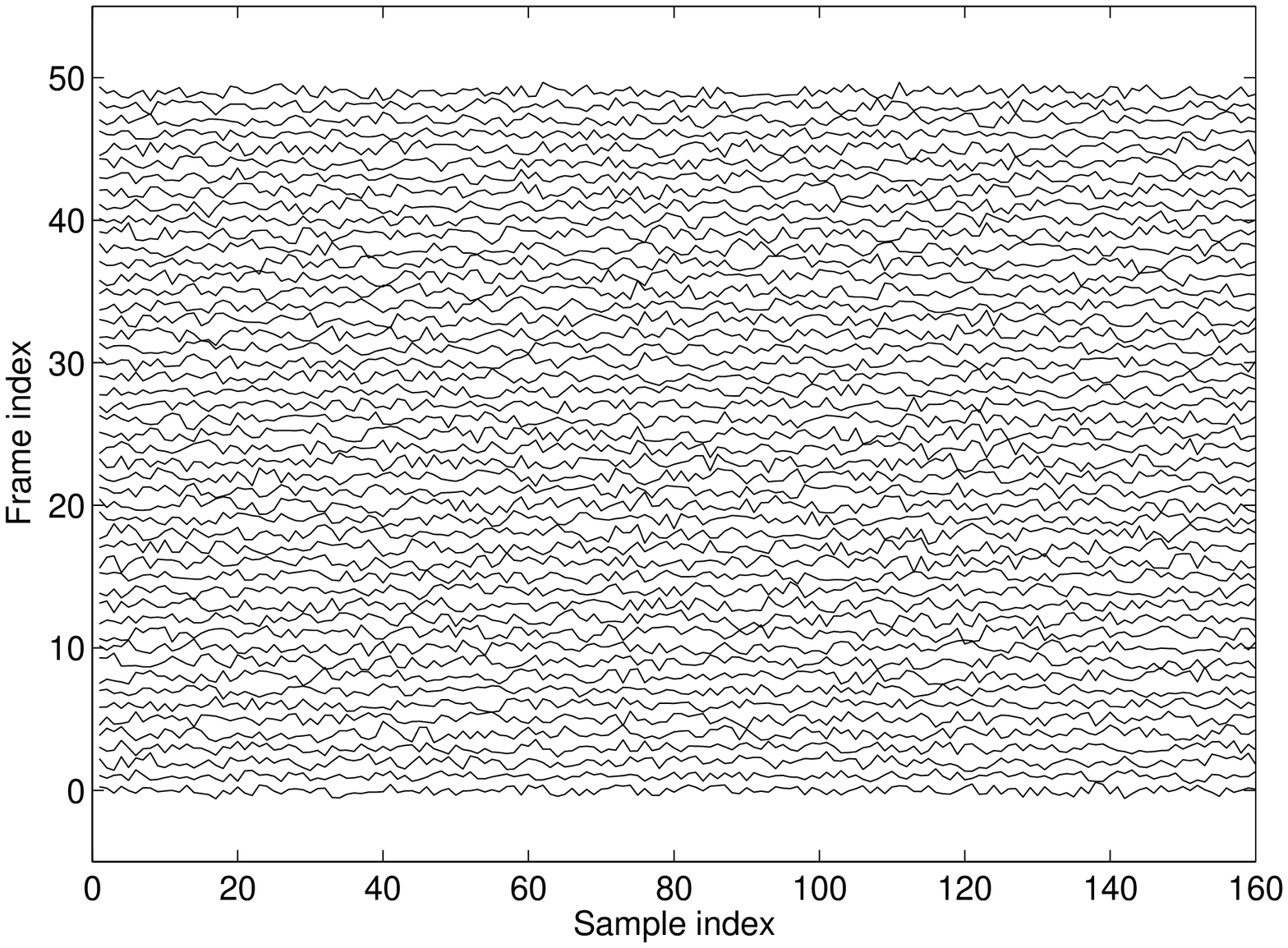}
}
\caption{Matrix form of audio segments with or without noise and extracted matrix features via Robust PCA with $\lambda = 1$ throughout. (a) Matrix form of a typical laugh sound effect audio segment; (b) The low-rank component recovered from (a) via robust PCA; (c) Matrix form of the same audio segment corrupted by white Gaussian noise with SNR=20dB; (d) The low-rank component recovered from (c) via robust PCA; (e) Matrix form of the same audio segment corrupted by white Gaussian noise and random large errors; (f) The low-rank component recovered from (e) via robust PCA.}
\label{fig:RPCAtoAudioSeg}       
\end{figure}

\begin{figure}

\subfigure[]{
\includegraphics[width=0.4\textwidth]{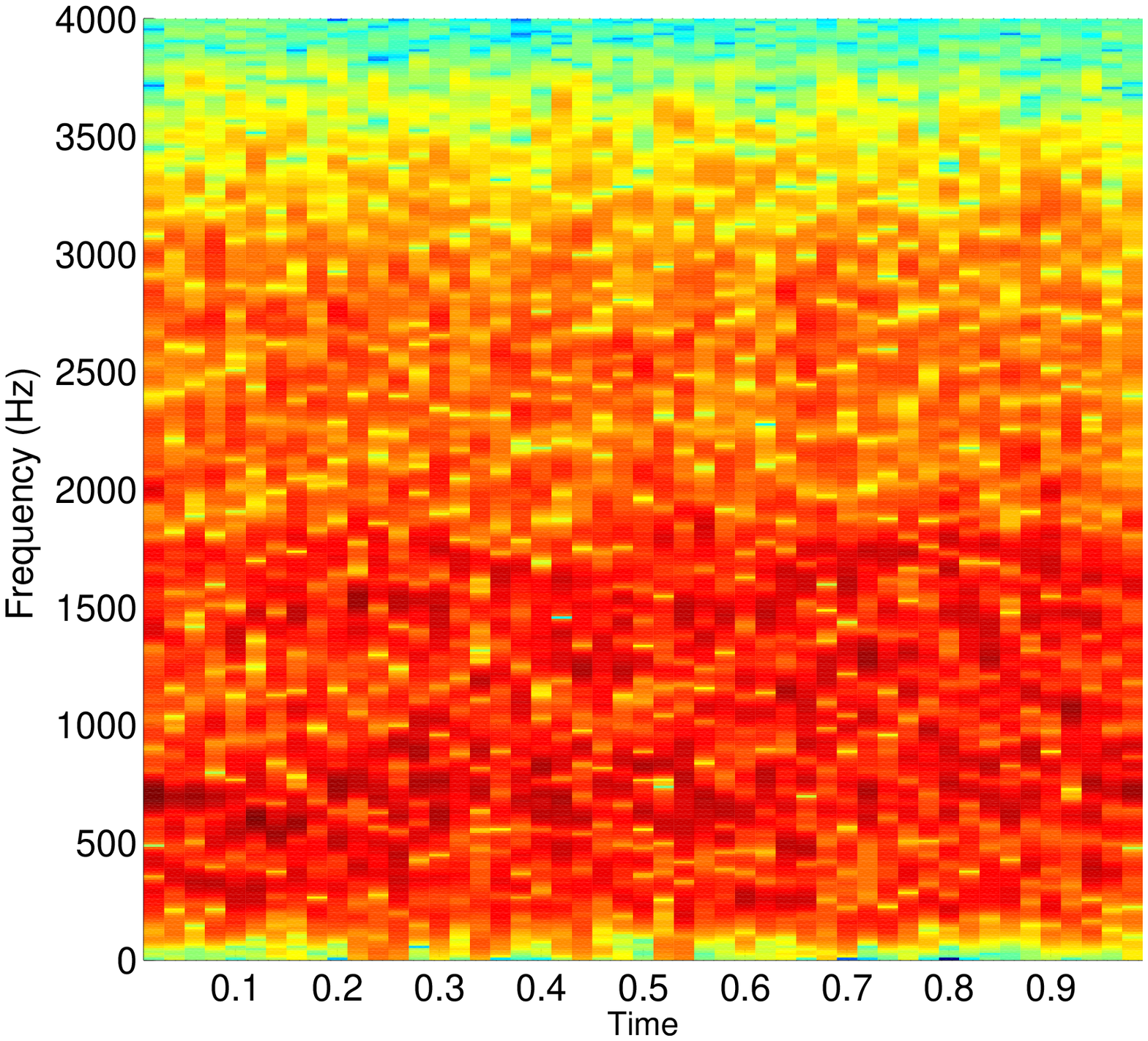}
\label{fig:MatLau}
}
\subfigure[]{
\includegraphics[width=0.4\textwidth]{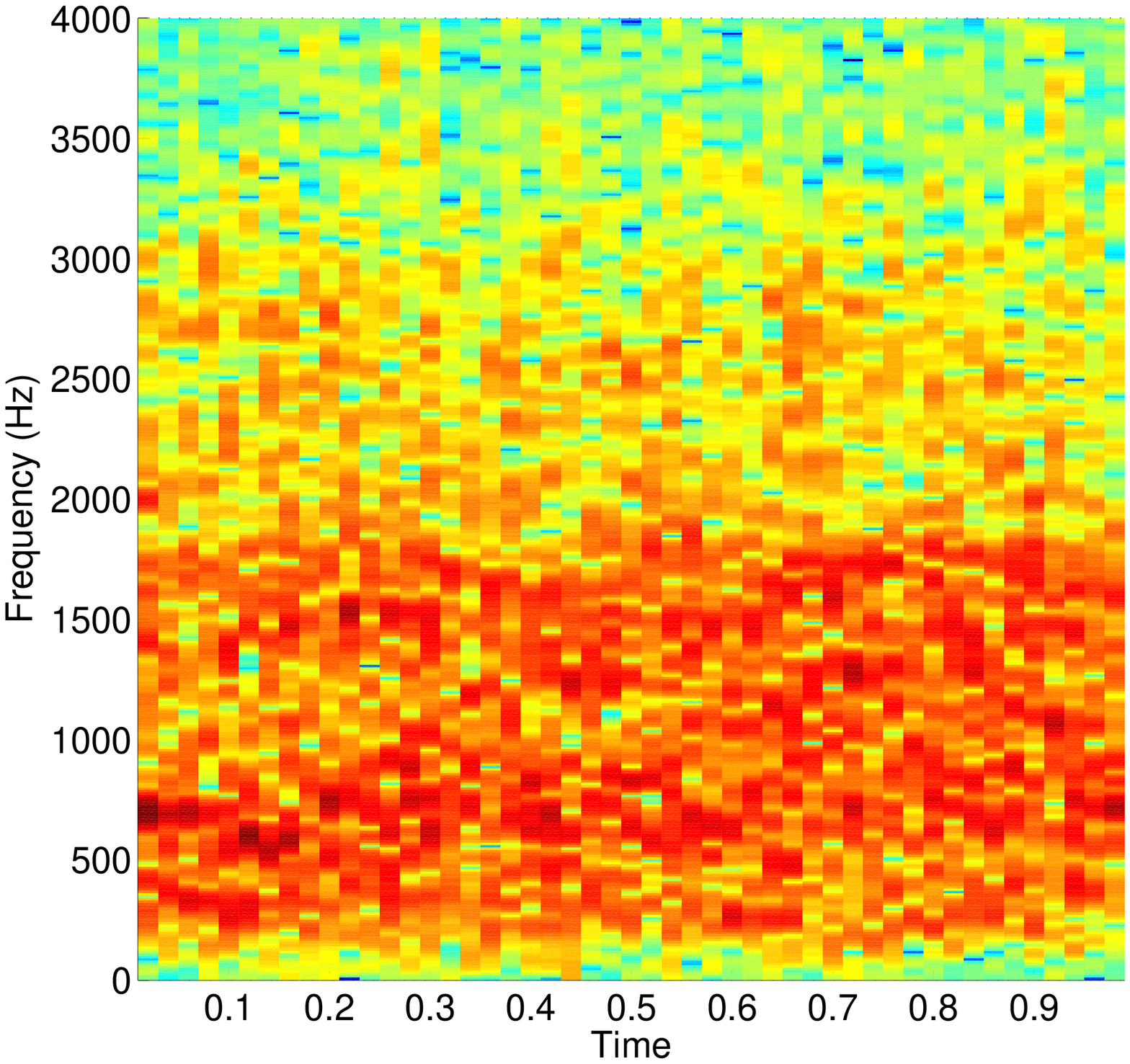}
}
\subfigure[]{
\includegraphics[width=0.4\textwidth]{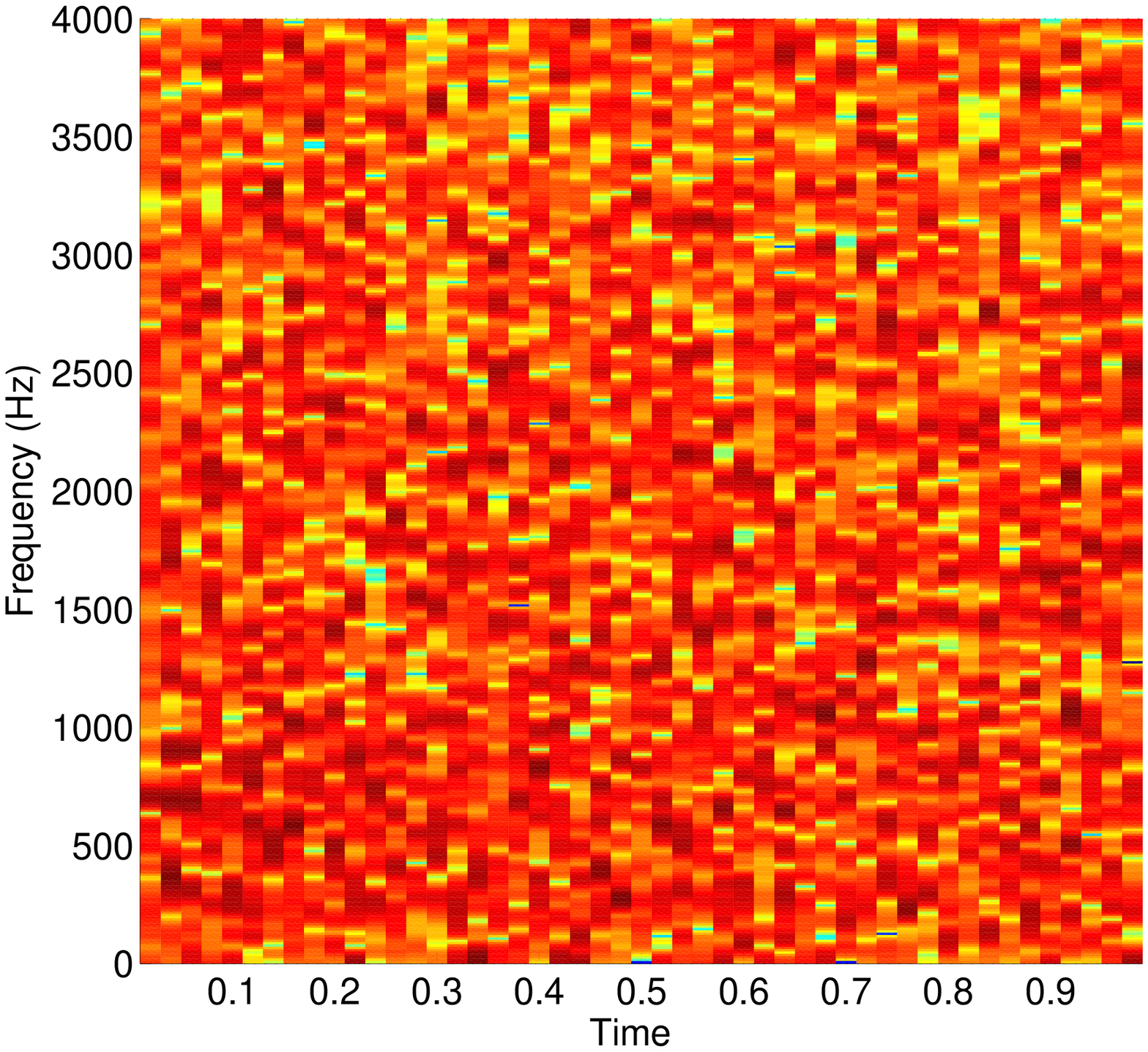}
}
\subfigure[]{
\includegraphics[width=0.4\textwidth]{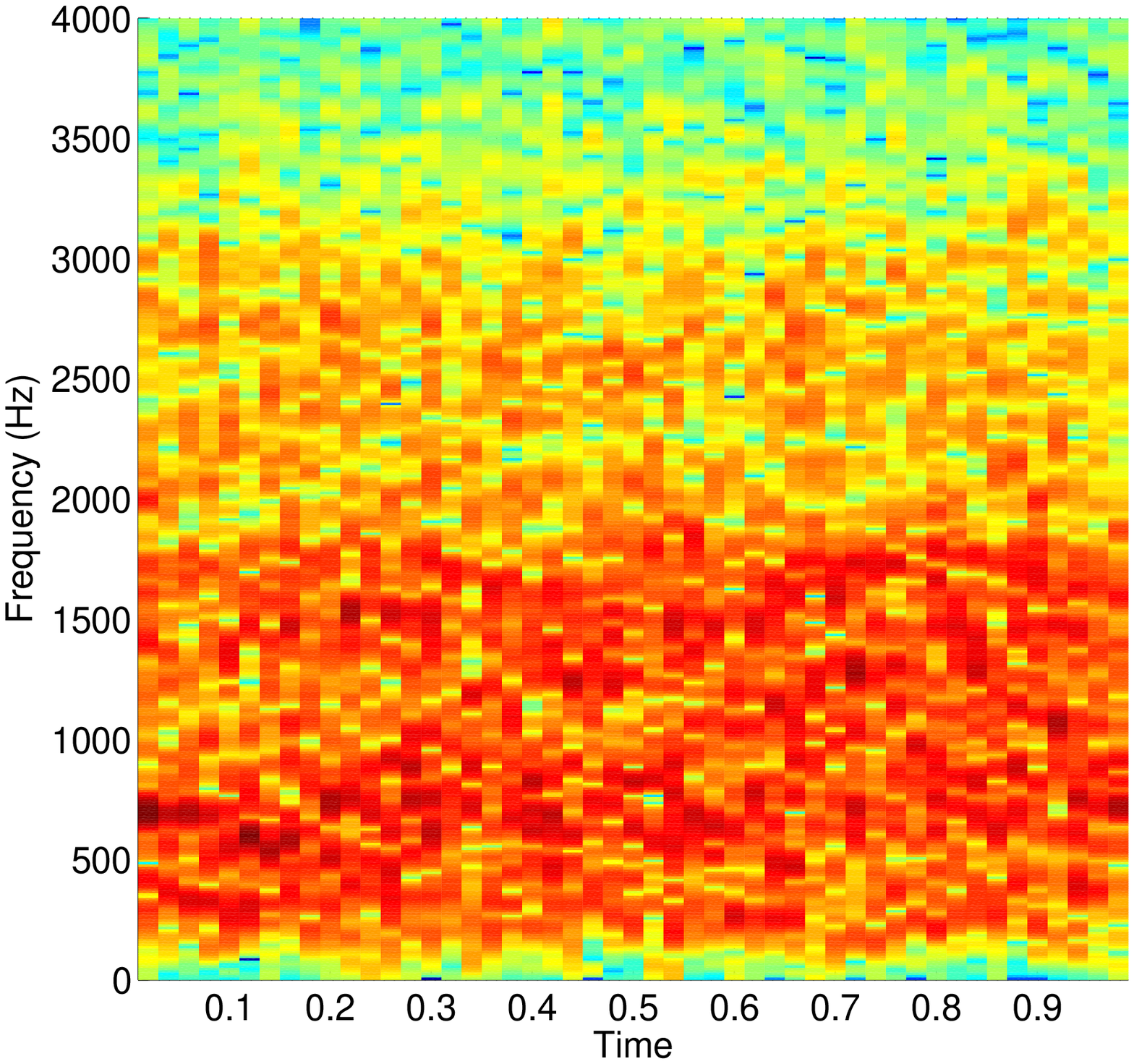}
}
\subfigure[]{
\includegraphics[width=0.4\textwidth]{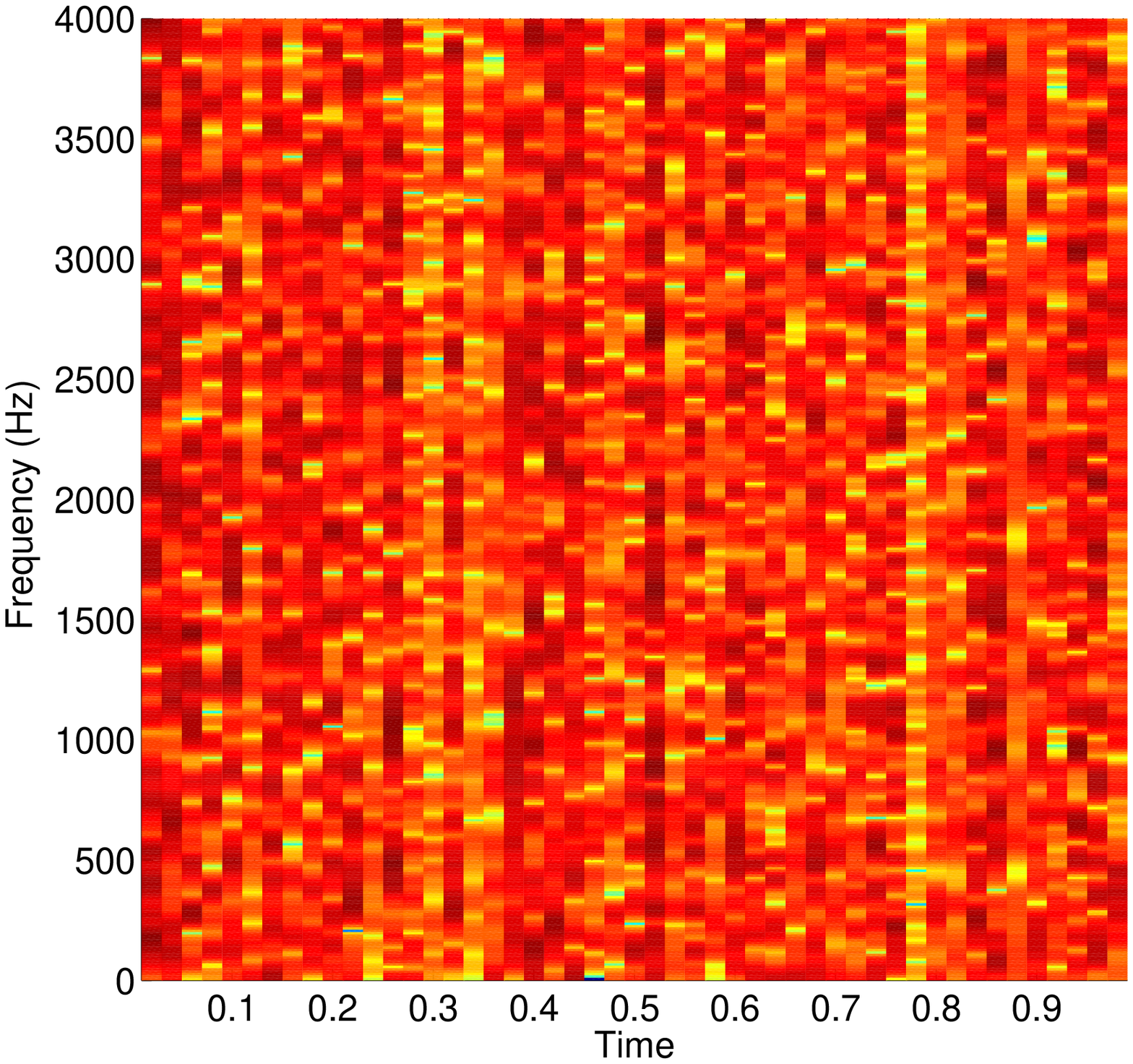}
}
\hspace{1in}
\subfigure[]{
\includegraphics[width=0.4\textwidth]{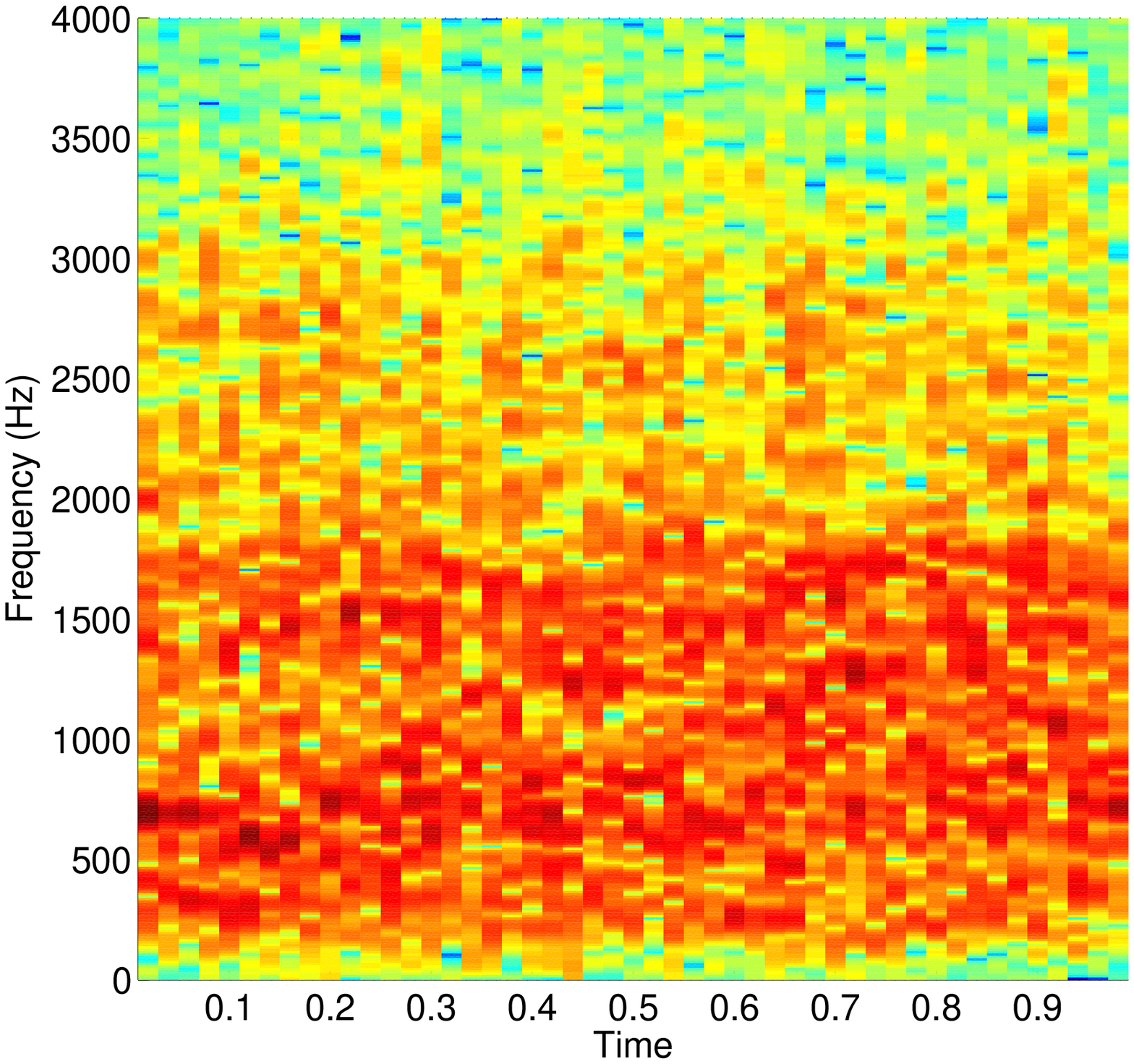}
}
\caption{spectrograms of low-rank approximations of the audio segments
with or without noise with $\lambda = 1$ throughout. (a) Spectrograms
of the original laugh segment; (b) Spectrograms of the low-rank approximation of the laugh segment; (c) Spectrograms of the same audio segment corrupted
by white Gaussian noise with SNR=20dB; (d) Spectrograms of the low-rank approximation of the laugh segment with white Gaussian noise; (e) Spectrograms of of the same audio segment corrupted
by white Gaussian noise and random large errors; (f) Spectrograms of the low-rank
approximation of the laugh segment white Gaussian noise and random large errors.}
\label{fig:Spectrograms_RPCAtoAudioSeg}       
\end{figure}

\section{Low Rank Matrix Classification}
\label{sec:MatrixClassifyAED}
\subsection{Notation and Problem Statement}
Having extracted robust matrix representation features, the linear matrix classification approach based on trace norm regularization framework proposed in~\cite{Tomioka2007} is used to classify them. The motivation for trace norm regularization framework is two fold: a) trace norm considers the interactive information among the frames in the matrix while the simple approach that treat
the matrix as a long vector would lose the information; b) trace norm is a suitable quantity that measures the complexity of the linear classifier. Generally, the problem for trace norm regularization based matrix classification is formulated as
\begin{equation}\label{eq:PrblmFrmltn}
\mathop {\min}\limits_{W,b}F_s (W,b)=f_s(W,b)+\lambda\left\|W\right\|_*
\end{equation}
where $W\in \mathbb{R}^{m\times n}$ is the unknown \emph{weight matrix}, $b\in \mathbb{R}$ is the \emph{bias}, $\left\|\cdot\right\|_*$ denotes the trace norm defined as the sum of the singular values, and $\lambda$ is the regularization parameter. $f_s(W,b)=\sum\limits_{i=1}^s {\ell(y_i, \mathrm{Tr}(W^{T}X_i)+b)}$ is the empirical cost function induced by some convex smooth loss function $\ell(\cdot,\cdot)$, where $\mathrm{Tr}(\cdot)$ denotes the trace, the subscript of $f_s(W,b)$ indicates the number of training samples or time of training procedure which is apparent from context, and $(X_i,y_i)\in \mathbb{R}^{m\times n}\times \mathbb{R}$ is the $i$th sample. In this work, the standard squared loss function is used. Hence the empirical cost function becomes $f_s(W,b)=\sum\limits_{i=1}^s {(y_i-\mathrm{Tr}(W^{T}X_i)-b)^2}$.

\subsection{APG Method for Matrix Classification}

Recently Toh and Yun~\cite{Toh2010}, Ji and Ye~\cite{Ji2009}, and Liu et al.~\cite{Liu2009} independently proposed similar algorithms that converge as $O(\frac{1}{k^2})$ for problem~(\ref{eq:PrblmFrmltn}) by using APG, where $k$ is the iteration counter. The precondition of using APG algorithm is that the loss function should be smooth, convex, and the gradient should satisfy Lipschitz condition. Since $f_s(W,b)$ in this work is a composition of smooth convex function with an affine mapping, hence it is convex and smooth~\cite{boyd2004convex}. For Lipschitz continuous, it is shown  in Theorem~\ref{theorem:Lipschitz} that the gradient of $f_s(W,b)$, denoted as
\begin{equation}\label{eq:Gradient}
\nabla_W f_s(W,b)=-2\sum\limits_{i=1}^s {(y_i-\mathrm{Tr}(W^{T}X_i)-b)X_i},
\end{equation}
is Lipschitz continuous. Thus the APG method can be used to solve matrix classification problem. In order to solve the unconstrained convex optimization problem~(\ref{eq:PrblmFrmltn}), APG approximate $f_s(W,b)$ locally as a quadratic function with bias fixed and solve
\begin{equation}\label{eq:LocallyApproximate}
\begin{split}
W_{k+1}=\mbox{arg}\!\min_{W\in\mathbb{R}^{m\times n}} Q(W,Z_k)
=f_s(Z_k,b)+\frac{t_k}{2}\left\|W-Z_k\right\|_F^2 \\
+<\nabla_W f_s(Z_k,b),W-Z_k>+\lambda\left\|W\right\|_*,
\end{split}
\end{equation}
which is assumed to be easy, to update the solution $W$. Based on the the work of Nesterov~\cite{Nesterov1983,Nesterov2005}, Toh and Yun~\cite{Toh2010}, Ji and Ye~\cite{Ji2009}, and Liu et al.~\cite{Liu2009} showed that setting $Z_k=W_k+\frac{t_{k-1}-1}{t_k}(W_k-W_{k-1})$ for a sequence ${t_k}$ satisfying $t_{k+1}^2-t_{k+1}\leq t_{k}^2$ results in a convergence rate of $O(\frac{1}{k_2})$. Due to Lemma~\ref{theorem:Lipschitz}, the estimation of step size $t_k$ in general APG~\cite{Toh2010,Ji2009,Liu2009} is omitted, for we have explicit Lipschitz constant. The APG approach for batch-mode weight matrix learning is described in Algorithm~\ref{algo:APG}. The $\mathcal{S}_{\varepsilon}[\cdot]$ in Algorithm~\ref{algo:APG} is the soft-thresholding operator introduced in~\cite{Lin2010}:
\begin{equation}
\mathcal{S}_{\varepsilon}[x]\doteq \left\{ \begin{array}{l}
 x-\varepsilon, \textrm{if } x>\varepsilon,   \\
 x+\varepsilon, \textrm{if } x<-\varepsilon,   \\
0, \textrm{otherwise}   \\
 \end{array} \right.
\end{equation}
where $x\in \mathbb{R}$ and $\varepsilon > 0$. For vectors and matrices, this operator is extended by applying element-wise.

\begin{algorithm}
\label{algo:APG}
Batch-Mode Weight Matrix Learning via APG

\textbf{Initialize} $W_0=Z_1\in\mathbb{R}^{m\times n},\alpha_1 =1, L=2mn\sum\limits_{i=1}^s \left\|X_i\right\|_F^2, \lambda. $

1: \textbf{while} not converged \textbf{do}

2: $(U,S,V)=\textrm{svd}(Z_k-\frac{1}{L}(-2\sum\nolimits_{i=1}^s {(y_i-\mathrm{Tr}(Z_k^{T}X_i)-b)X_i}))$.

3: $W_k=U\mathcal{S}_{\frac{\lambda}{L}}[S]V^T$.

4: $\alpha_{k+1}=\frac{1+\sqrt{1+4\alpha_k^2}}{2}$.

5: $Z_{k+1}=W_k+\frac{\alpha_{k}-1}{\alpha_{k+1}}(W_k-W_{k-1})$.

6: $b_k=\frac{1}{s}\sum\limits_{i=1}^s (y_i-\mathrm{Tr}(W^{T}_{k}X_i)).$

7: $k\leftarrow k+1$.

8: \textbf{end while}

\textbf{Output}: $W\leftarrow W_k$.
\end{algorithm}

The general APG~\cite{Toh2010,Ji2009,Liu2009} algorithms only provide the methods for learning weight matrices, do not give out the bias updating rules. In order to update the bias $b$, fixes the weight matrix $W_k$ and solve the following problem
\begin{equation}\label{eq:BiasUpdateProblem}
b_k = \mathop {\min}\limits_{b}\sum\limits_{i=1}^s {(y_i-\mathrm{Tr}(W^{T}_{k}X_i)-b)^2}+\lambda\left\|W_k\right\|_*,
\end{equation}
which results in the bias updating rule
\begin{equation}\label{eq:BiasUpdateRule}
b_k = \frac{1}{s}\sum\limits_{i=1}^s (y_i-\mathrm{Tr}(W^{T}_{k}X_i)).
\end{equation}
This results in the line 6 of Algorithm~\ref{algo:APG}.
For the stopping criteria of the iterations, we take the following relative error conditions:
\begin{equation}\label{eq:StoppingCriteria}
\|W_{k+1}-W_{k}\|_F/\|W_{k}\|_F<\varepsilon_1\text{ and }|b_{k+1}-b_{k}|/|b_{k}|<\varepsilon_2.
\end{equation}

After the weight matrix $W$ and bias $b$ are found, the observed MFCCs matrix $X_i$ can be classified via
\begin{equation}\label{eq:PredictMFCCmatric}
\hat{y}_i = \mathrm{Tr}(W^{T}X_i) + b.
\end{equation}


\subsection{Determination of Lipschitz Constant}
As a special case of general convex optimization problem, we derived the closed-form of the Lipschitz constant, hence the step size estimation~\cite{Toh2010,Ji2009} of the general APG method was omitted in all our approach. The determination of the Lipschitz constant is shown in the following theorem.

\begin{theorem}
\label{theorem:Lipschitz}
$\nabla_W f_s(\cdot,b)$ is Lipschitz continuous with constant $L=2mn\sum\limits_{i=1}^s \left\|X_i\right\|_F^2$, i.e., $\forall U, V\in \mathbb{R}^{m\times n}$,
\begin{equation}\label{eq:LipschitzCondition}
\left\|\nabla_W f_s(U,b)-\nabla_W f_s(V,b)\right\|_F\leq L\left\|U-V\right\|_F,
\end{equation}
where $\left\|\cdot\right\|_F$ denotes the Frobenius norm.
\end{theorem}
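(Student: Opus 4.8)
The plan is to estimate the Frobenius norm of the gradient difference directly, exploiting the explicit affine form of $\nabla_W f_s$ recorded in~(\ref{eq:Gradient}). First I would substitute $U$ and $V$ into~(\ref{eq:Gradient}) and subtract; the constant terms $y_i-b$ cancel, and by linearity of the trace the difference telescopes into a single clean expression,
\[
\nabla_W f_s(U,b) - \nabla_W f_s(V,b) = 2\sum_{i=1}^s \mathrm{Tr}\!\left((U-V)^T X_i\right) X_i .
\]
This reduces the whole problem to controlling the scalar coefficients $\mathrm{Tr}((U-V)^T X_i)$ together with the matrix norms $\left\|X_i\right\|_F$.

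Next I would apply the triangle inequality for the Frobenius norm to move the norm inside the sum, obtaining the bound $2\sum_{i=1}^s \left|\mathrm{Tr}((U-V)^T X_i)\right|\,\left\|X_i\right\|_F$. The crux is then a single scalar estimate of the trace inner product $\mathrm{Tr}((U-V)^T X_i) = \sum_{j,k} (U-V)_{jk} (X_i)_{jk}$. Bounding each of the $mn$ summands by $\left|(U-V)_{jk}\right|\left|(X_i)_{jk}\right| \leq \left\|U-V\right\|_F \left\|X_i\right\|_F$ (which holds because every entry of a matrix is dominated by its Frobenius norm) gives
\[
\left|\mathrm{Tr}\!\left((U-V)^T X_i\right)\right| \leq mn\,\left\|U-V\right\|_F\,\left\|X_i\right\|_F ,
\]
and this is precisely the step that produces the factor $mn$ in the stated constant. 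Substituting back and factoring out $\left\|U-V\right\|_F$ yields $\left\|\nabla_W f_s(U,b) - \nabla_W f_s(V,b)\right\|_F \leq 2mn\big(\sum_{i=1}^s \left\|X_i\right\|_F^2\big)\left\|U-V\right\|_F$, which is the claim with $L = 2mn\sum_{i=1}^s \left\|X_i\right\|_F^2$.

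I expect the only delicate point to be this scalar inner-product bound, and it is worth flagging that the element-wise estimate above is deliberately crude. The sharper Cauchy--Schwarz inequality $\left|\mathrm{Tr}((U-V)^T X_i)\right| \leq \left\|U-V\right\|_F \left\|X_i\right\|_F$ would already close the argument and in fact give the tighter constant $2\sum_{i=1}^s \left\|X_i\right\|_F^2$, without the $mn$ factor. Since any upper bound on the local ratio is a valid Lipschitz constant, the $mn$-inflated value stated in the theorem is still correct and is what Algorithm~\ref{algo:APG} uses for its step size $L$; I would remark on this so that the factor $mn$ is not read as a necessity of the estimate.
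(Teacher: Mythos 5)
Your proposal is correct and follows essentially the same route as the paper's proof: subtract the two gradients using~(\ref{eq:Gradient}), apply the triangle inequality for the Frobenius norm, and then bound each scalar $\left|\mathrm{Tr}((U-V)^T X_i)\right|$ crudely by $mn\left\|U-V\right\|_F\left\|X_i\right\|_F$ (the paper routes this last estimate through $\mathrm{Tr}(A^TB)\le\|A\|_1\|B\|_1\le mn\|A\|_F\|B\|_F$, while you bound the $mn$ entrywise products directly, which is the same estimate in substance). Your closing observation is also accurate: Cauchy--Schwarz gives $\left|\mathrm{Tr}((U-V)^T X_i)\right|\le\|U-V\|_F\|X_i\|_F$ and hence the sharper constant $2\sum_{i=1}^s\|X_i\|_F^2$, so the paper's $L$ is valid but not tight.
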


\begin{proof}
Applying Equation~(\ref{eq:Gradient}) with $U,V$ to the right of Equation~(\ref{eq:LipschitzCondition}),
we obtain
\begin{equation}
\begin{split}
 &\left\|\nabla_W f_s(U,b)-\nabla_W f_s(V,b)\right\|_F \nonumber \\
=& \| -2\sum\nolimits_{i=1}^s {(y_i-\mathrm{Tr}(U^{T}X_i)-b)X_i} \\
& +2\sum\nolimits_{i=1}^s {(y_i-\mathrm{Tr}(V^{T}X_i)-b)X_i} \|_F \\
=& 2\left\| \sum\nolimits_{i=1}^s {(\mathrm{Tr}(U^{T}X_i)l-\mathrm{Tr}(V^{T}X_i))X_i}\right\|_F \\
\le& 2\sum\nolimits_{i=1}^s \left| \mathrm{Tr}((U^{T}-V^{T})X_i)\right| \left\| X_i\right\|_F \\
\le& 2mn\sum\nolimits_{i=1}^s \left\| U^{T}-V^{T}\right\|_F \left\| X_i\right\|_F^2 \\
=& (2mn\sum\nolimits_{i=1}^s  \left\| X_i\right\|_F^2)\left\| U^{T}-V^{T}\right\|_F,
\end{split}
\end{equation}
where in the last inequality, the easily verified fact that $\mathrm{Tr}(A^{T}B)\le \left\| A\right\|_1\left\| B \right\|_1 \le mn\left\| A\right\|_F\left\| B \right\|_F$ for
$\forall A,B \in \mathbb{R}^{m\times n}$ is used. Here $\left\| \cdot \right\|_1$ denotes the $\ell_1$ norm which is the sum of the absolute values of the matrix elements.

Thus the lemma is proofed, that is to say $\nabla_W f_s(\cdot,b)$ is Lipschitz continuous with constant $L=2mn\sum\nolimits_{i=1}^s \left\|X_i\right\|_F^2$.
\end{proof}

The APG based batch-mode weight learning method is effective for small training set, but with large training sets, this classical optimization technique may become impractical in terms of memory requirements. Furthermore, this method cannot efficiently deal with dynamic training data of time sequences, such as audio and video processing. To tackle the insufficiency, we propose an online learning framework in the following section.


\section{Online Learning for Matrix Classification}
\label{sec:OnlineLearning}
\subsection{Online Learning with APG}
We present in this section the basic components of our online learning algorithm for matrix classification, as well as a few minor variants which speed up our implementation in practice.

\begin{algorithm}
\label{algo:OnlineLearning}
Online MC Learning Based on APG.

\textbf{Initialize} $W_0\in\mathbb{R}^{m\times n}, b_0\in\mathbb{R}, L_0=0, \lambda\in \mathbb{R}.$

1: $A_0\in \mathbb{R}^{m\times n}\leftarrow 0, B_0\in\mathbb{R}^{mm\times nn}\leftarrow 0, c_0\in \mathbb{R}\leftarrow 0, D_0\in \mathbb{R}^{m\times n}\leftarrow 0$(reset the ``past'' information).

2: \textbf{for} $t=1$ \textbf{to} $T$ \textbf{do}

3: Draw training sample $(X_t,y_t)$ from $p(X,y)$.

4: // Line 5-9 update ``past'' information.

5: $A_t\leftarrow A_{t-1}+y_tX_t$;

6: $B_t\leftarrow B_{t-1}+X_t\otimes X_t$;

7: $c_t\leftarrow c_{t-1}+y_t$;

8: $D_t\leftarrow D_{t-1}+X_t$;

9: $L_t\leftarrow L_{t-1}+2mn\left\|X_t\right\|_F^2$.

10: // Line 11-19 update $W_t$ and $b_t$ using Algorithm~\ref{algo:APG}, with $W_{t-1}$ and $b_{t-1}$ as warm restart.

11: $W_{0,t}=Z_{1,t}=W_{t-1}\in\mathbb{R}^{m\times n},b_{0,t}=b_{t-1},\alpha_1 =1,k=1.$

12: \textbf{while} not converged \textbf{do}

13: $(U,S,V)=\textrm{svd}(Z_{k,t}-\frac{1}{L_t}(-2A_t+2\textrm{GridTr}(Z_{k,t},B_t)+2b_{k-1,t}D_t)$.

14: $W_{k,t}=U\mathcal{S}_{\frac{\lambda}{L_t}}[S]V^T$.

15: $\alpha_{k+1}=\frac{1+\sqrt{1+4\alpha_k^2}}{2}$.

16: $Z_{k+1,t}=W_{k,t}+\frac{\alpha_{k}-1}{\alpha_{k+1}}(W_{k,t}-W_{k-1,t})$.

17: $b_{k,t}=\frac{1}{t}(c_t-\mathrm{Tr}(W^{T}_{k,t}D_t)$

18: $k\leftarrow k+1$.

19: \textbf{end while}

20: $W_t\leftarrow W_{k,t},b_t\leftarrow b_{k,t}.$

21: \textbf{end for}

\textbf{Output}: $W\leftarrow W_T, b\leftarrow b_T.$
\end{algorithm}


Our procedure is summarized in Algorithm~\ref{algo:OnlineLearning}. The $\otimes$ operator in step 6 of the algorithm denotes the Kronecker product. Given two matrices $A\in \mathbb{R}^{m_1\times n_1}$ and $B\in \mathbb{R}^{m_2\times n_2}$, $A\otimes B$ denotes the Kronecker product between $A$ and $B$, defined as the matrix in $\mathbb{R}^{m_1m_2\times n_1n_2}$, defined by blocks of sizes $m_2\times n_2$ equal to $A[i,j]B$. $\textrm{GridTr}(Z_{k,t},B_t)$ in step 13 denotes an operator with input $Z_{k,t}\in \mathbb{R}^{m\times n}$ and $B_t\in \mathbb{R}^{mm\times nn}$, result in $\mathbb{R}^{m\times n}$ with the $(i,j)$th element defined as the trace of the product between $Z_{k,t}^T$ and the $(i,j)$th $\mathbb{R}^{m\times n}$ block of $B_t$.

Assuming the training set composed of i.i.d. samples of a distribution $p(X,y)$, its inner loop draws one training sample $(X_t,y_t)$ at a time. This sample is first used to update the ``past'' information $A_{t-1}$, $B_{t-1}$, $c_{t-1}$, and $D_{t-1}$. Then the Algorithm~\ref{algo:APG} is applied to update the weight matrix with the warm start $W_{t-1}$ obtained at the previous iteration. Since $F_t(W,b_{t-1})$ is relative close to $F_{t-1}(W,b_{t-1})$ for large values of $t$, so are $W_t$ and $W_{t-1}$, under suitable assumptions, which makes it efficient to use $W_{t-1}$ as warm restart for computing $W_t$.

\subsection{Online Learning with inexact APG}

Algorithm~\ref{algo:OnlineLearning} calls APG to update the weight matrix for each coming sample by solving the sub-problem with fixed bias $b$
\begin{equation}\label{eq:SubProblem}
W_t = \mathop {\min}\limits_{W}\sum\limits_{i=1}^t {(y_i-\mathrm{Tr}(W^{T}X_i)-b_{t-1})^2}+\lambda\left\|W\right\|_*
\end{equation}
exactly which cause computational load for large scale training set. Fortunately, due to the closeness of consecutive weight matrix, we do not have to solve the sub-problem exactly. Rather, updating $W_{t-1}$ once when solving this sub-problem is sufficient in practice. This leads to an online MC learning method based on inexact APG, described in Algorithm~\ref{algo:InexactOnlineLearning}.

\begin{algorithm}
\label{algo:InexactOnlineLearning}
Online MC Learning with Inexact APG.

\textbf{Initialize} $W_0\in\mathbb{R}^{m\times n}, b_0\in\mathbb{R}, L_0=0, \lambda\in \mathbb{R}.$

1: $A_0\in \mathbb{R}^{m\times n}\leftarrow 0, B_0\in\mathbb{R}^{mm\times nn}\leftarrow 0, c_0\in \mathbb{R}\leftarrow 0, D_0\in \mathbb{R}^{m\times n}\leftarrow 0$ (reset the ``past'' information).

2: \textbf{for} $t=1$ \textbf{to} $T$ \textbf{do}

3: Draw training sample $(X_t,y_t)$ from $p(X,y)$.

4: // Line 5-9 update ``past'' information.

5: $A_t\leftarrow A_{t-1}+y_tX_t$;

6: $B_t\leftarrow B_{t-1}+X_t\otimes X_t$.

7: $c_t\leftarrow c_{t-1}+y_t$;

8: $D_t\leftarrow D_{t-1}+X_t$.

9: $L_t\leftarrow L_{t-1}+2mn\left\|X_t\right\|_F^2$.

10: // Line 11-16 compute $W_t$ using inexact APG, with $W_{t-1}$ as warm restart.

11: $W_{0,t}=W_{t-1}\in\mathbb{R}^{m\times n}.$

12: $(U,S,V)=\textrm{svd}(W_{0,t}-\frac{1}{L_t}(-2A_t+2\textrm{GridTr}(W_{0,t},B_t)+2b_{t-1}D_t)$.

13: $W_{1,t}=U\mathcal{S}_{\frac{\lambda}{L_t}}[S]V^T$.

14: $(U,S,V)=\textrm{svd}(W_{1,t}-\frac{1}{L_t}(-2A_t+2\textrm{GridTr}(W_{1,t},B_t)+2b_{t-1}D_t)$.

15: $W_{2,t}=U\mathcal{S}_{\frac{\lambda}{L_t}}[S]V^T$.

16: $W_t\leftarrow W_{2,t}.$

17: // Line 18 updates the bias $b_t$.

18: $b_t=\frac{1}{t}(c_t-\mathrm{Tr}(W^{T}_{t}D_t)$

19: \textbf{end for}

\textbf{Output}: $W\leftarrow W_T, b\leftarrow b_T.$
\end{algorithm}

\subsection{Online Learning with Mini-batch}

In some conditions, use the classical heuristic in gradient descent algorithm, we may also improve the convergence speed of our algorithm by drawing $\mu > 1$ training samples at each iteration instead of a single one. Let us denote by $(X_{t,1},y_{t,1}),...,(X_{t,\mu},y_{t,\mu})$ the samples drawn at iteration $t$. We can now replace lines 5 and 9 of Algorithm~\ref{algo:OnlineLearning} and \ref{algo:InexactOnlineLearning} by
\begin{equation}\label{eq:MiniBatchInfoUpdata}
\begin{array}{l}
A_t\leftarrow A_{t-1}+\sum\limits_{i=1}^{\mu}{y_{t,i}X_{t,i}}, \\
B_t\leftarrow B_{t-1}+\sum\limits_{i=1}^{\mu}{X_{t,i}\otimes X_{t,i}}, \\
c_t\leftarrow c_{t-1}+\sum\limits_{i=1}^{\mu}{y_{t,i}}, \\
D_t\leftarrow D_{t-1}+\sum\limits_{i=1}^{\mu}{X_{t,i}}, \\
L_t\leftarrow L_{t-1}+\sum\limits_{i=1}^{\mu}{2mn\left\|X_{t,i}\right\|_F^2}.
\end{array}
\end{equation}
But in real applications, this batch method may not improve the convergence speed on the whole since the batch past information computation (Equation~(\ref{eq:MiniBatchInfoUpdata})) would occupy much of the time. The updating of $B_t$ needs to do Kronecher product which spend much of the computing resource. If the computation cost of Equation~(\ref{eq:MiniBatchInfoUpdata}) can be ignored or largely decreased, for example by parallel computing, the batch method would increase the convergence speed by a factor of $\mu$.

\section{Experimental Validation}
\label{sec:Experimental}
\subsection{Dataset}
\label{sec:DatasetandExpeSetup}
Experiments are conducted on a collected database. We downloaded about 20hours videos from Youku~\cite{Youku}, with different programs and different languages. The start and end position of all the applause and laugh of the audio-tracks are manually labeled. The database includes 800 segments of each sound effect. Each segment is about 3-8s long and totally about 1hour data for each sound effect. All the audio recordings were converted to monaural wave format at a sampling frequency of 8kHz and quantized 16bits. Furthermore, the audio signals have been normalized, so that they have zero mean amplitude with unit variance in order to remove any factors related to the recording conditions.

\subsection{Online Learning}
\label{sec:OnlineLearning}
In this section, we conduct detailed experiments to demonstrate the characteristics and merits of the online learning for matrix classification problem. Five algorithms are compared: the traditional batch algorithm with exact APG algorithm (APG); the online learning algorithm with exact APG (OL\_APG); the online learning algorithm with inexact APG (OL\_IAPG); the online learning algorithm with exact APG and update Equation~(\ref{eq:MiniBatchInfoUpdata}) (OL\_APG\_Batch); the online learning algorithm with inexact APG and update Equation~(\ref{eq:MiniBatchInfoUpdata}) (OL\_IAPG\_Batch). All algorithms are run in Matlab on a personal computer with an Intel 3.40GHz dual-core central processing unit (CPU) and 2GB memory.

For this experiment, audio streams were windowed into a
sequence of short-term frames (20 ms long) with non overlap. 13 dimensional
MFCCs including energy are extracted, and adjacent 50 frames (one second) of
MFCCs form the MFCCs matrix feature. The goal is to classify the matrices
according to their labels. Two learning tasks are used to evaluate the
performance of the online learning method, which are laugh/non-laugh segment
classifier learning and applause/non-applause segment classifier learning.
For OL\_APG and OL\_APG\_Batch algorithms, the parameters in the stopping
criteria~(\ref{eq:StoppingCriteria}) are set $\varepsilon_1=10^{-8}$ and
$\varepsilon_2=10^{-8}$ or smaller, which are determined by empirical evidence that larger values would make the algorithm diverge. The regularization constant $\lambda$
is anchored by the large explicit fixed step size $L$ and the matrices involved, this can be seen from $\frac{\lambda}{L}$ in the line 3 in Algorithm~\ref{algo:APG}, which means that in practice the parameter $\lambda$ should be set adaptably with the step size $L$ in the online process. But due to this variation of $\lambda$, the comparisons between the algorithms would not bring into effect. Hence in this work we use $\lambda = 1$ throughout.
\begin{figure*}
\subfigure[]{
\includegraphics[width=0.45\textwidth]{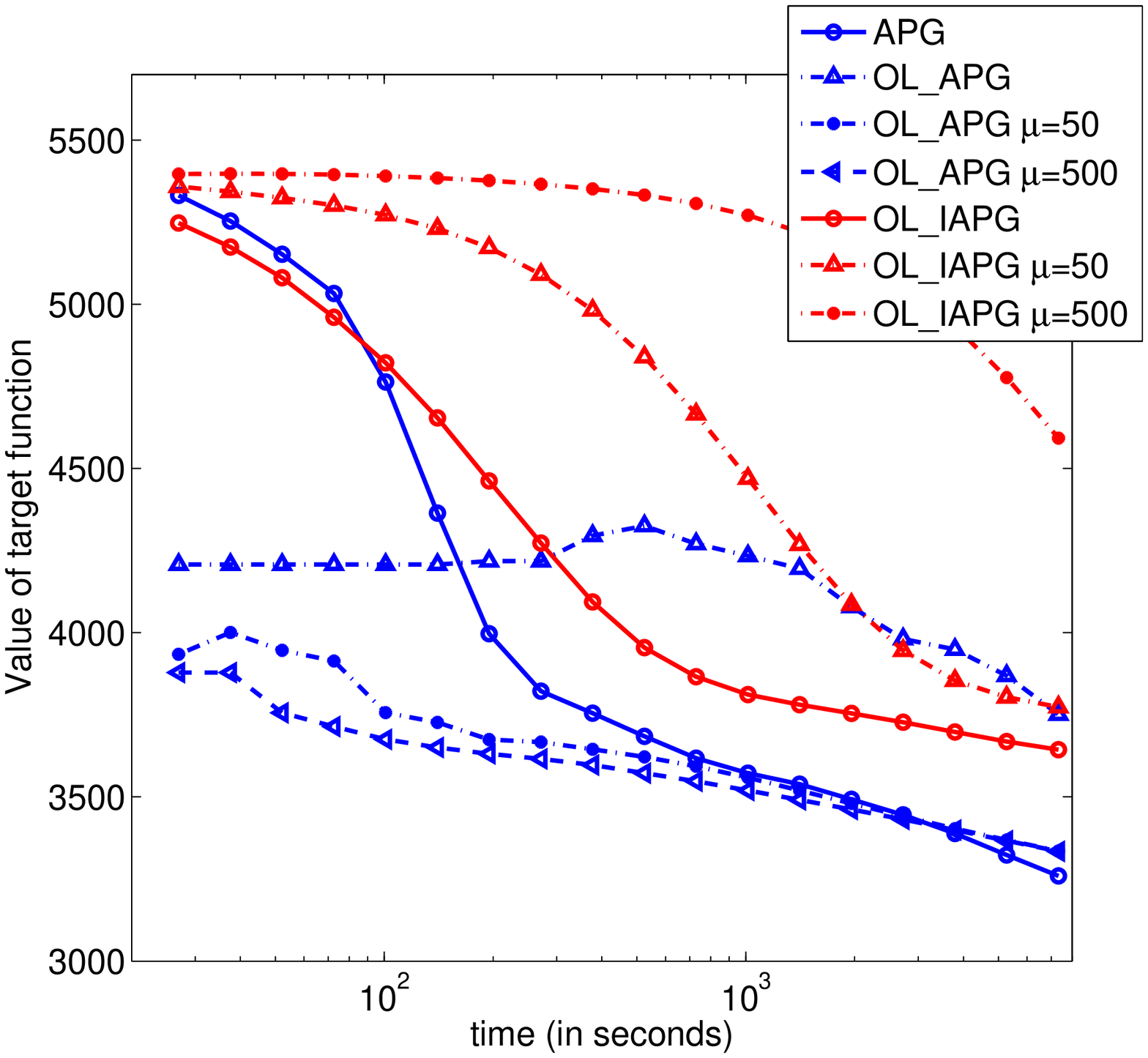}
}
\subfigure[]{
\includegraphics[width=0.45\textwidth]{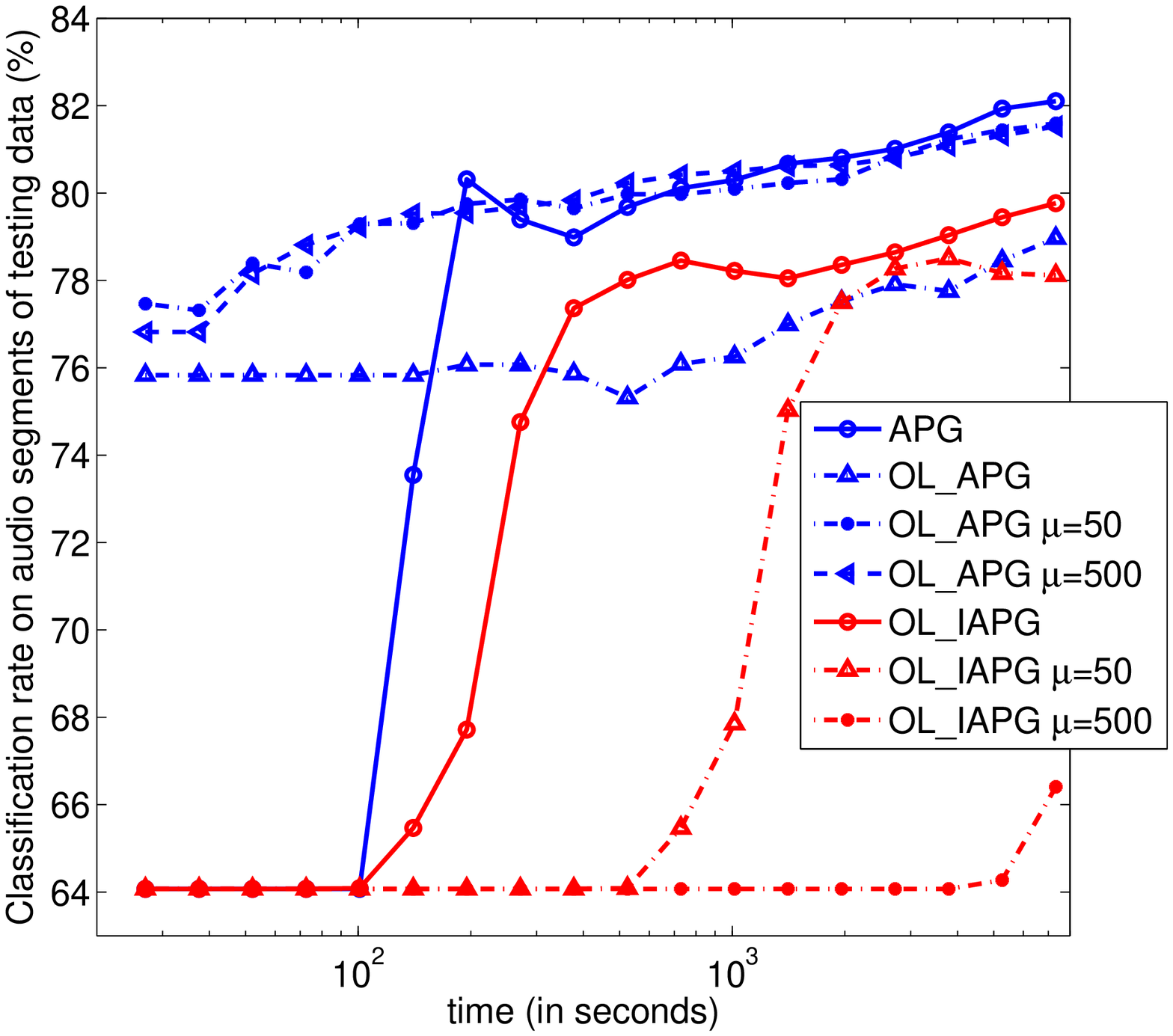}
}
\subfigure[]{
\includegraphics[width=0.45\textwidth]{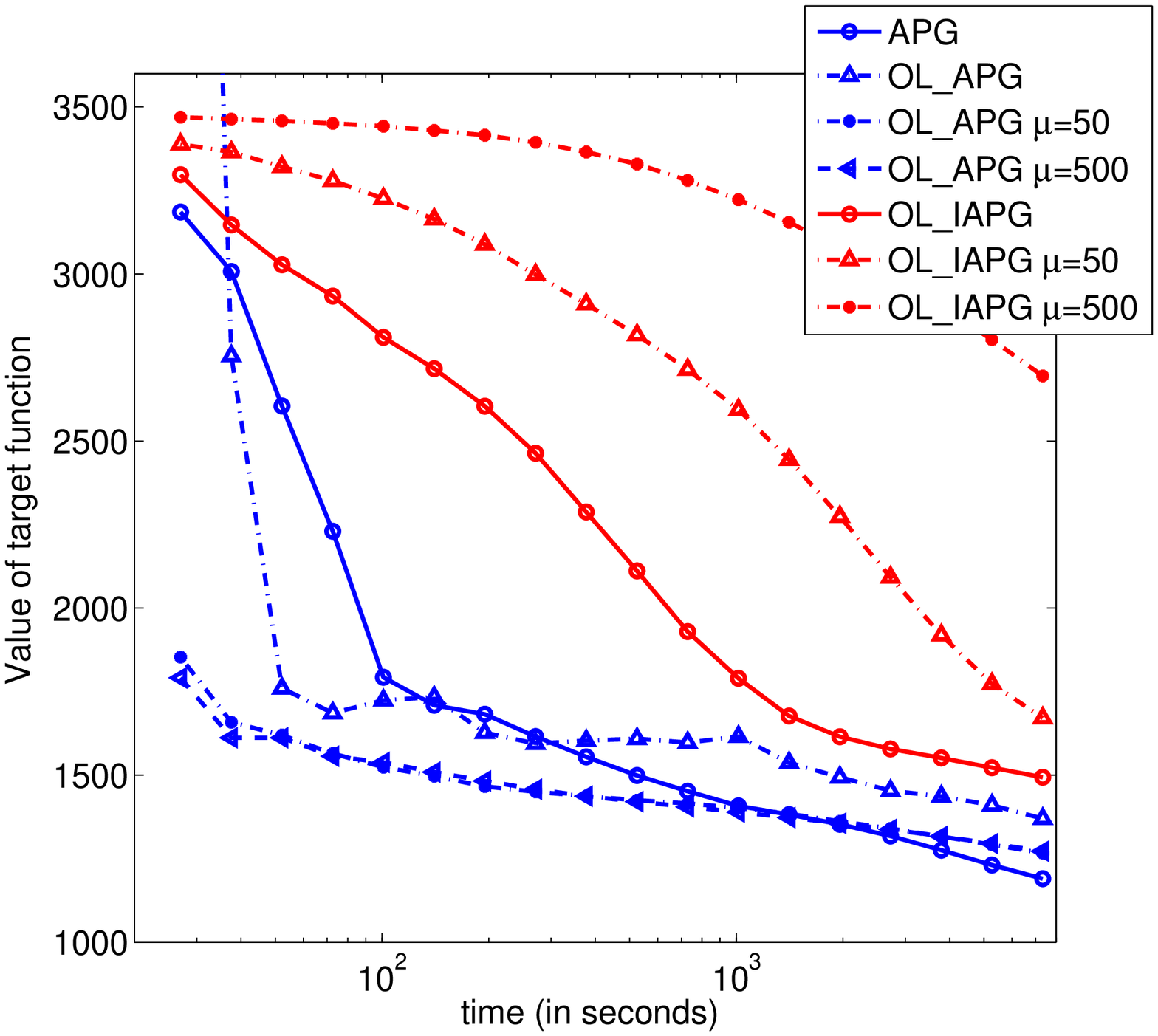}
}
\hspace{0.5in}
\subfigure[]{
\includegraphics[width=0.45\textwidth]{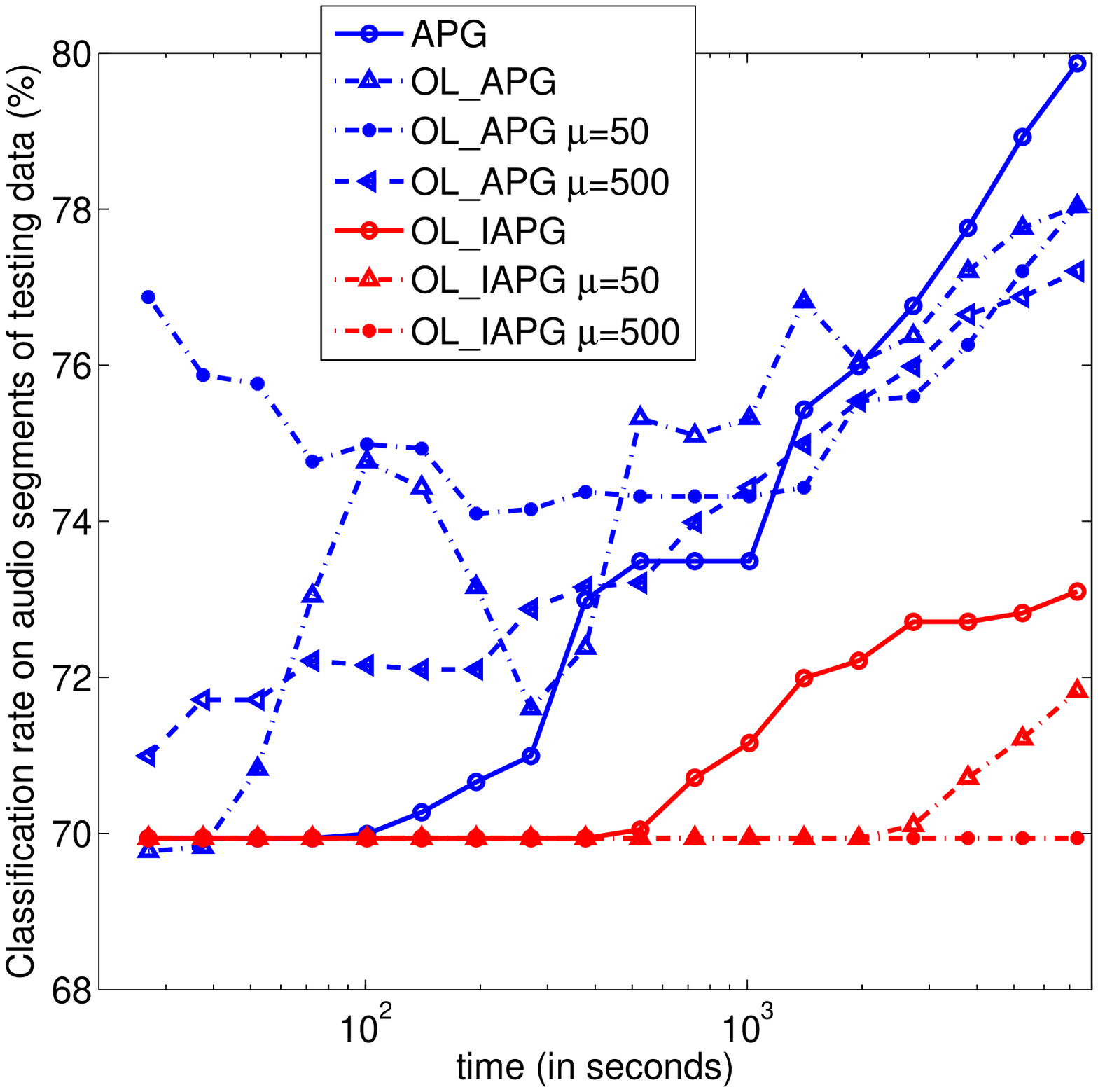}
}
\caption{Comparisons between various online learning methods and results are
reported as functions of learning time on a logarithmic scale. (a) Value of
target function for online learning of applause segments classifier; (b)
Classification rate on audio segments of testing data for online learning
of applause segments classifier; (c) Value of target function for online
learning of applause segments classifier; (d) Classification rate on audio
segments of testing data for online learning of laugh segments classifier.}
\label{fig:PerformancComparOnline}       
\end{figure*}

Fig.~\ref{fig:PerformancComparOnline} compares the five online algorithms. The proposed online algorithm draws samples from the entire training set. We use a logarithmic scale for the computation time. Fig.~\ref{fig:PerformancComparOnline}a shows the values of the target functions as functions of time. It can be seen that the online learning methods without batch or with small batch past information updating converge faster than the methods with large batch past information updating and reason for this has been explained in the last paragraph of Section~\ref{sec:OnlineLearning}. After online methods and batch methods converge, the two methods result in almost equal performance. Fig.~\ref{fig:PerformancComparOnline}(b)(d) shows the classification rates for different algorithms respectively. In accordance with the values of the target functions, the classification accuracies of online methods without or with small batch updating become stable quickly than that of methods with batch updating. Although the inexact algorithms process samples much fast with less resources than exact ones, they converge slowly.

\subsection{Robustness}

 This section is to assess the effectiveness of robust PCA extracted low-rank matrix features. Original features (MFCCs\_Matrix), corrupted with 0dB and -5dB white Gaussian noise (WGN SNR=5dB, 0dB, -5dB) and 10\%, 30\%, 50\% random large errors (LE 10\%, 30\%, 50\%), and parallelism robust PCA extracted features (rPCA) are compared. In the comparisons, the parameters in the stopping criteria~(\ref{eq:StoppingCriteria}) are set $\varepsilon_1=10^{-6}$ and $\varepsilon_2=10^{-6}$, which are determined by the same method as in Section~\ref{sec:OnlineLearning}. The regularization constant $\lambda$ is set $1/\sqrt{50}$ which is a classical normalization factor according to~\cite{Bickel}.

\begin{figure*}
\subfigure[]{
\includegraphics[width=0.45\textwidth]{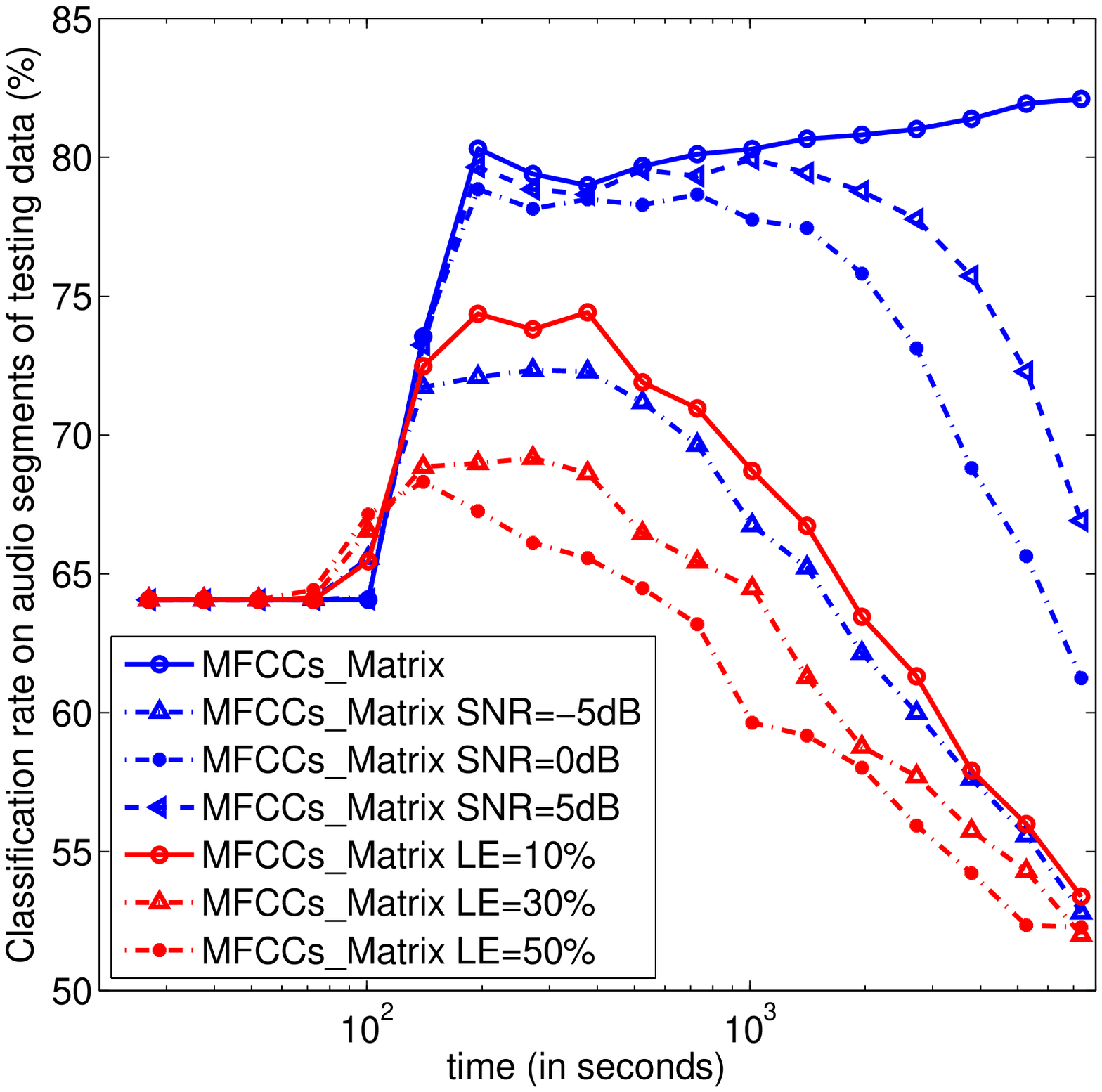}
}
\subfigure[]{
\includegraphics[width=0.45\textwidth]{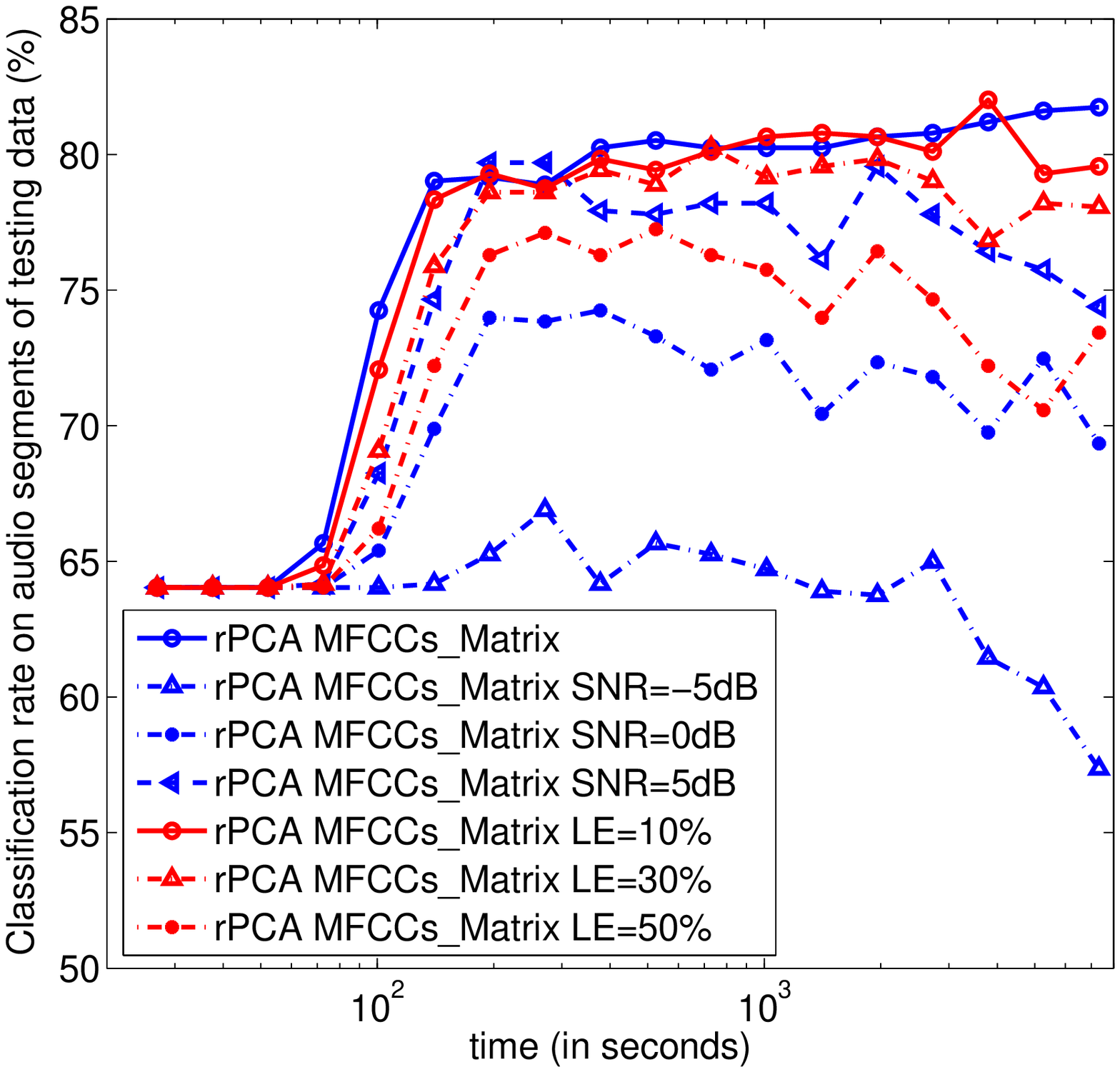}
}
\subfigure[]{
\includegraphics[width=0.45\textwidth]{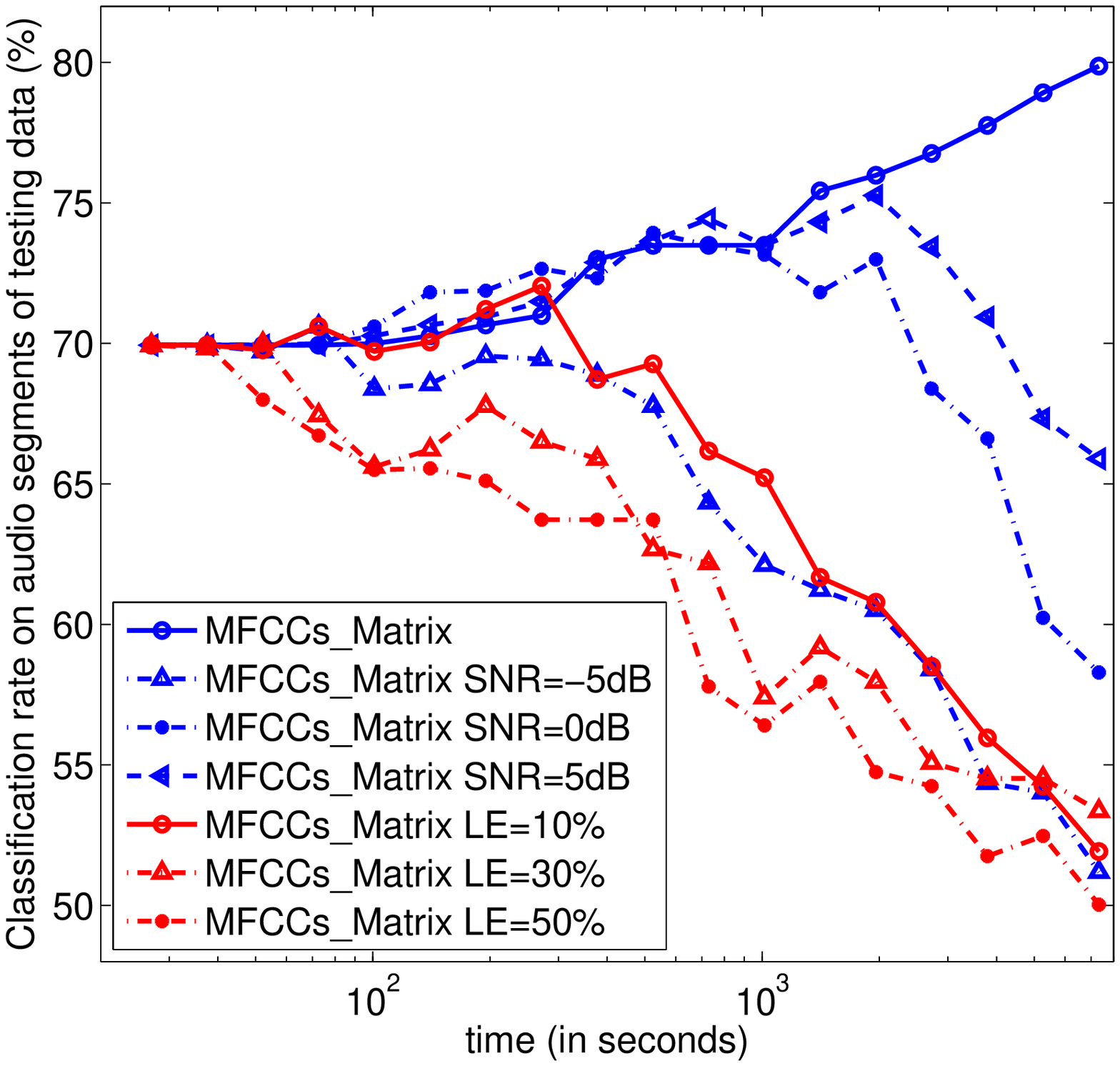}
}
\hspace{0.5in}
\subfigure[]{
\includegraphics[width=0.45\textwidth]{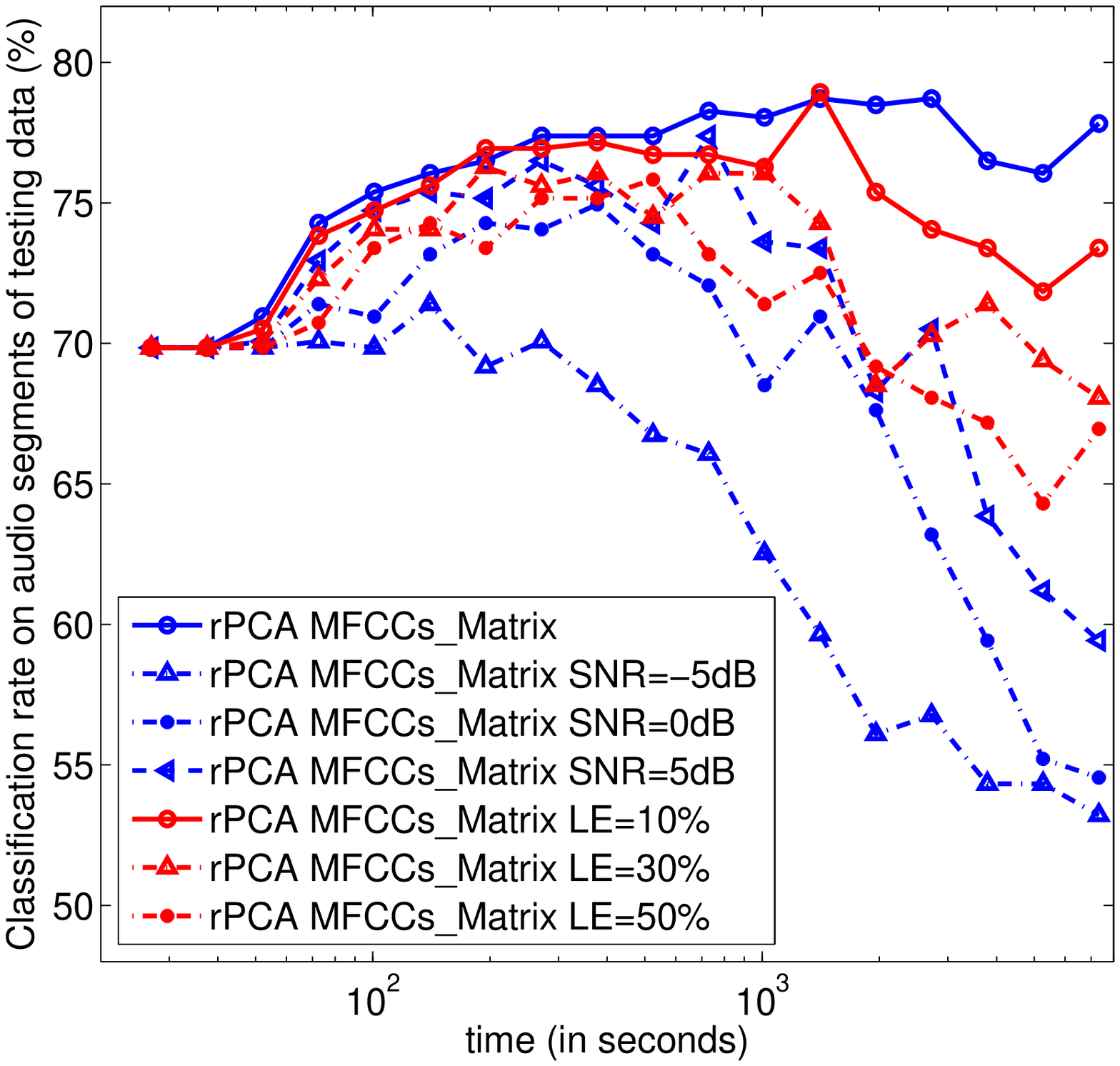}
}
\caption{(a) and (b): Comparisons of robust PCA extracted low-rank
features and MFCCs matrices in applause/non-applause segments classification.
(c) and (d): Comparisons of robust PCA extracted low-rank
features and MFCCs matrices in laugh/non-laugh segments classification.}
\label{fig:PerformancCompar}       
\end{figure*}

The classification accuracy of the one second audio segments is used to evaluate the performance of the methods. Fig.~\ref{fig:PerformancCompar}
shows the performances of the methods with different matrix features under different noise conditions as the functions of the training time used in Algorithm~\ref{algo:APG}. It can be seen that the original MFCCs matrix feature is not robust to noises, especially random large errors. If 10\% of the elements of the MFCCs matrix feature are corrupted with random large errors, then generally there would be a decrease of 25\% in
audio segments classification accuracy, while for robust PCA extracted low-rank features, the decrease
are 5\% in average. For WGN, the robust PCA features also perform better than original features, although not so sharp as in the situation of large errors. The experiments show that the low-rank components are more robust to noises and errors than the original features.

\begin{table*}
\centering
\caption{Performance comparison between our approach and SVM classification on long vector method for applause/non-applause segment classification.}
\label{tab:AppCompareWithSVM}       
\begin{tabular}{|c|c|c|c|c|c|c|c|}
\hline
Approach & Normal & SNR=-5dB & SNR=0dB & SNR=5dB & LE=10\% & LE=30\% & LE=50\%\\
\hline
SVM+LV  &  81.88\%  & 64.07\% & 64.07\% & 64.07\% & 64.07\% & 64.07\% & 64.07\% \\
\hline
APG+MFCCs\_Matrix  &82.76\%&  51.11\%&  55.87& 61.76\%&  52.78\%&  52.10\%&  51.16\% \\
\hline
SVM+rPCA LV & 81.88\%&  64.07\%&  64.07\%&  64.07\%& 81.77\%&  81.55\%&  81.43\% \\
\hline
APG+rPCA MFCCs\_Matrix & 82.17\%&  54.44\%&  61.75\%&  70.47\%&  80.33\%&  76.22\%&  72.96\% \\
 \hline
\end{tabular}
\end{table*}

\begin{table*}
\centering
\caption{Performance comparison between our approach and SVM classification on long vector method for laugh/non-laugh segment classification.}
\label{tab:LauCompareWithSVM}       
\begin{tabular}{|c|c|c|c|c|c|c|c|}
\hline
Approach & Normal & SNR=-5dB & SNR=0dB & SNR=5dB & LE=10\% & LE=30\% & LE=50\%\\
\hline
SVM+LV  &  81.88\%  & 60.01\%&  60.01\%&  60.01\%& 60.01\%&  60.01\%&  60.01\% \\
\hline
APG+MFCCs\_Matrix  &90.02\%&  53.03\%&  63.64\%& 70.07\%&  54.30\%&  52.47\%&  52.59\% \\
\hline
SVM+rPCA LV & 75.06\%&  60.01\%&  60.01\%&  60.01\%& 74.81\%&  74.97\%&  74.56\% \\
\hline
APG+rPCA MFCCs\_Matrix & 85.84\%&  54.36\%&  67.71\%&  76.97\%&  84.76\%&  80.24\%&  77.50\% \\
 \hline
\end{tabular}
\end{table*}

We also compare our method with the state-of-the-art SVM classifier with long vector feature (650 dimension) obtained by vectorizing the matrix. The results are summarized in Table~\ref{tab:AppCompareWithSVM} and Table~\ref{tab:LauCompareWithSVM} for applause/non-applause and laugh/non-laugh classification respectively. The results show that the SVM become useless under 5dB wight noise and 10\% large corruptions, while our methods still works. But for the low-rank component, the SVM performs better on some situations for which is due to the robustness of the features.


\section{Conclusions}
\label{sec:Conclusions}
In this work, we present a novel framework based on trace norm minimization for audio segment classification. The novel method unified feature extraction and pattern classification into the same framework. In this framework, robust PCA extracted low-rank component of original signal is more robust to corrupted noise and errors, especially to random large errors. We also introduced online learning algorithms for matrices classification tasks. We obtain the closed-form updating rules of the weight matrix and the bias. We derive the explicit form of the Lipschitz constant, which saves the computation burden in searching step size. Experiments show that even the percent of the original feature elements corrupted with random large errors is up to 50\%, the performance of the robust PCA extracted features almost have no decrease. In future work, we plan to test this robust feature in other audio or speech processing related applications and extend robust PCA, even trace norm minimization related methods from matrices to the more general multi-way arrays (tensors). Some work related to learning methods are also worth considering, such that the alternating between minimization with respect to weight matrix and bias may results in fluctuation of target value (even in batch mode), thus optimization algorithm that minimization jointly on weight matrix and bias are required; for multi-classification problems with more classes, some hierarchy methods may be introduced to improve the classification accuracy.


%
%
%

\end{document}